\documentclass{article}
\usepackage{geometry} 

\RequirePackage{amsthm,amsmath,amssymb}
\usepackage{algorithm}
\usepackage{algorithmic}
\usepackage{ulem}
\usepackage{multirow}
\usepackage{booktabs}
\usepackage{natbib}
\RequirePackage[colorlinks,citecolor=blue,linkcolor=blue,urlcolor=blue,pagebackref]{hyperref}
\usepackage{xr}
\usepackage{comment}
\usepackage{subcaption} 
\usepackage{graphicx} 
\usepackage{float}      % 用于 [H] 强制位置
\usepackage{authblk}

\def\argmin{\mathop \text{\rm{argmin}}}
\theoremstyle{plain}
\newtheorem{theorem}{Theorem}[section]

\newtheorem{corollary}[theorem]{Corollary}
\theoremstyle{definition}

\newtheorem{assumption}[theorem]{Assumption}
\theoremstyle{remark}
\newtheorem{remark}[theorem]{Remark}

\title{Deep Optimal Individualized Treatment Rules for Bivariate Survival Outcomes via Adaptive Prediction-Powered Learning}

\author[1]{Kun Ren}
\author[2]{Yifan Cui} 
\author[1]{Wen Su}

\affil[1]{Department of Biostatistics, City University of Hong Kong}
\affil[2]{Center for Data Science, Zhejiang University} 

\date{}
\begin{document}
\maketitle

\begin{abstract}
In randomized trials involving multiple treatments, bivariate survival outcomes present significant analytical challenges for making decisions. This paper addresses the problem of deriving optimal individualized treatment rules to maximize the joint survival probability beyond fixed time points $(t_1, t_2)$ through deep neural networks, while accounting for right censoring. We propose a novel approach that models treatment rules via stochastic policies, coupling marginal accelerated failure time models via link function to capture bivariate dependence. To enhance robustness and effectiveness of decision making, we introduce an adaptive prediction-powered method that leverages auxiliary predictions from machine learning models.  
\end{abstract}

\section{Introduction}

In investigations of precision medicine, the estimation of optimal Individualized Treatment Rules (ITRs) has become a cornerstone of research, aiming to offer reasonable medical decisions to patient heterogeneity based on baseline covariates \cite{zhao2012estimating,shi2018high,wang2018learning}. 
While significant progress has been made in developing ITRs for univariate outcomes, real-world clinical studies often present a more complex picture. 
For example, randomized clinical trials frequently involve multiple treatments with bivariate survival outcomes. 
In these settings, the collected data usually consist of paired event times that may be right-censored and exhibit strong dependence \cite{shih1995inferences,marra2020copula,he2024analysis}. 
In such complex scenarios, developing the optimal ITR becomes extremely difficult, while existing studies for ITR typically focus on univariate outcomes \cite{jiang2017estimation,cui2023estimating,cui2024learning}.

To fill the research gap, we propose a novel method for deriving optimal ITRs that maximize the joint survival probability at fixed time points $(t_1, t_2)$. 
We model the marginal distributions of bivariate survival times by accelerated failure time (AFT) models \cite{lin1998accelerated,zeng2007efficient} and capture the dependence of them via copula links. 
Different from deterministic decision rules which are often non-differentiable and hard to optimize, we formulate the ITR as a stochastic policy parameterized by deep neural networks (DNNs).

A central innovation of our work is the integration of modern machine learning predictions into statistical learning to improve the accuracy and robustness of decision making.
We introduce an \textbf{adaptive prediction-powered (APP)} estimation method. 
Leveraging auxiliary information from black-box machine learning models, the proposed APP estimator can be designed to be robust even if machine learning models are unreliable.
We also offer a corresponding algorithm to ensure computational effectiveness.

The main contributions of this paper are summarized as follows:
\begin{itemize}
\item \textbf{A Unified Bivariate Survival ITR Framework:} 
We propose a flexible approach that accommodates multiple treatments and bivariate right-censored time-to-event data. 
By integrating AFT marginal models with a copula-based structural equation, the proposed method effectively captures the latent correlation between survival times, which is critical for optimal decision making.

\item \textbf{Adaptive Prediction-Powered Estimation:} 
We develop a novel estimation method that incorporates machine learning predictions into survival analysis. 
Extending concepts from prediction-powered inference \citep{angelopoulos2023prediction, zrnic2024cross}, the proposed APP estimation method allows for the flexible adjustment of weights based on the quality of the predictor. This approach ensures rigorous statistical guarantees while significantly boosting estimation efficiency compared to classical methods.

\item \textbf{Computational Efficiency:} 
We reformulate the optimal decision problem as learning a stochastic policy distribution via a softmax-output deep neural network. 
Such a procedure transforms the typically non-convex, non-smooth optimization problem associated with indicator-based ITRs into a smooth objective function. 
This allows the application of efficient gradient-based optimization algorithms and ensures computational feasibility.

\item \textbf{Theoretical Guarantees:} 
We provide a comprehensive theoretical analysis, establishing the consistency of the proposed APP estimator and the risk consistency of the derived deep ITR. 
\end{itemize}

\section{Related work}

\textbf{Optimal Individualized Treatment Rules in Survival Analysis} 

The estimation of ITRs has been extensively studied for continuous and binary outcomes. 
For right-censored time-to-event data, \cite{goldberg2012q, zhao2015doubly, bai2017optimal} provided several weighting methods to estimate expected survival time or survival probability with decison making.
\cite{jiang2017estimation} proposed methods to maximize $t$-year survival probabilities using inverse probability weighting.  

Our study extends the above works by addressing the specific complexities of \textit{bivariate} survival outcomes, where the optimization objective involves the conditional joint probability $P(T_1(a) > t_1^*, T_2(a) > t_2^* \mid X=x)$, requiring explicit modeling of the dependence structure between potential outcomes $T_1(a)$ and $T_2(a)$ under treatment level $a$. 

\textbf{Bivariate Survival Modeling and Copulas} 

Modeling correlated failure times under censoring is a classical problem in survival analysis.  
To capture dependence, copula models have gained popularity due to their flexibility in separating marginal distributions from the dependence structure. 
\cite{marra2020copula} demonstrated the efficacy of copula link-based additive models for right-censored event time data. 

We adopt the above methodology, utilizing copulas to link marginal survival models, thereby allowing for the derivation of a precise objective function for the target decision learning task.

\textbf{Prediction-Powered Inference} 

Recent advances in statistical inference have focused on utilizing powerful but potentially biased machine learning predictions to improve estimation. 
\cite{angelopoulos2023prediction} introduced prediction-powered inference (PPI) to construct valid confidence intervals using black-box predictors in semi-supervised settings. 
Similarly, \cite{zrnic2024cross} explored cross-prediction-powered inference to improve the estimation efficiency.  

Our contribution is on adapting these concepts specifically for the estimation of nuisance functions in complex survival models. 
Different from standard semi-supervised settings, in the presence of right censoring, the proposed APP estimation method uses stable predict machines to reduce the variance of the \textit{inverse-probability-of-censoring weighting} (IPCW) least squares estimator \citep{li2003empirical}. 
By introducing a user-specified weight $c$, our method generalizes existing debiasing procedures, offering a flexible trade-off between bias correction and variance reduction.

\textbf{Deep Learning for Decision Making}

The integration of DNNs into decision-making frameworks has significantly advanced the estimation of optimal ITRs. 
Different from traditional approaches that separate outcome modeling from decision assignment, recent advances utilize deep learning to handle high-dimensional heterogeneity and complex decision boundaries. 
For example, \cite{saleh2024skin} utilized neural networks to capture the non-linear relationships between variables and optimize decision rules, demonstrating the flexibility of deep architectures in constrained optimization settings.

We propose a novel deep policy learning approach in which DNNs play a dual role: they serve as black-box predictive models for imputing censored outcomes and nonparametrically model the optimal policy distributions.
By employing a softmax transformation, we ensure the learned policy is probabilistic and everywhere differentiable. 
This formulation relaxes the discrete treatment selection problem, smoothing the objective function and facilitating stable training via backpropagation, which offers a substantial computational advantage over the direct optimization of non-smooth, discrete decision rules.

\section{Preliminary}  
\subsection{Model and assumption}
Consider a randomized trial with $K+1$ treatment levels indexed by $a \in \mathcal{A} := \{0, \dots, K\}$. Let $X \in \mathcal{X} \subset \mathbb{R}^p$ denote the vector of baseline covariates. 
We are interested in the bivariate survival times $(T_1, T_2) \in \mathbb{R}^2$. Under the potential outcomes framework, let $(T_1(a), T_2(a))$ denote the potential event times under treatment $a$.

Due to right censoring, we observe $(Y_{1}, Y_{2}) = (\min(T_{1}, C_{1}), \min(T_{2}, C_{2}))$ and indicators $(\Delta_{1}, \Delta_{2}) = (I(T_{1} \le C_{1}), I(T_{2} \le C_{2}))$, where $(C_1, C_2)$ are censoring times. 
The observed data consists of $n$ i.i.d. samples $\mathcal{O} = \{O_i\}_{i=1}^n= \{Y_{1i}, Y_{2i}, \Delta_{1i}, \Delta_{2i}, X_i, A_i\}_{i=1}^n$, and the random vector $O$ has probability law $P$. 
Let $n_a$ be the sample size of grouped data $\mathcal{O}_{a} = \{(X_i , Y_{1i},Y_{2i}, \Delta_{1i},\Delta_{2i})\}_{i=1}^{n_a}$ under treatment $a\in\mathcal{A}$. 

We consider following standard causal inference assumptions \cite{imbens2004nonparametric} under the randomized controlled trial:
\begin{assumption} \label{as1}
For any $j=1,2$ and $a\in\mathcal{A}$,

(i) Consistency: $T_j = \sum_{a=0}^K T_j(a) I(A=a)$;

(ii) Unconfoundedness: $(T_1(a), T_2(a)) \perp A \mid X$;

(iii) Positivity: $ c < P(A=a) < 1-c$ with a constant $c\in(0,1)$.
\end{assumption}

In this study, we employ the accelerated failure time model for marginal outcomes, for $a\in\mathcal{A}$,
\begin{equation} \label{AFT}
\log T_j(a) = \beta_{ja}^\top X + \epsilon_{ja}, \quad j=1,2,
\end{equation}
where $\beta_{ja}\in\mathbb{R}^{p}$ captures the heterogeneous effects of covariates on potential outcomes varying with different treatment levels. 
Suppose that $\epsilon_{ja}$ follows the a normal distribution with unknown variance $\gamma_{ja}^2$.
Therefore, the \textit{conditional marginal survival function} of $T_j(a)$ given $X=x$ can be parametrically modeled as
\begin{align*}
&S_j(t ,x,a; (\beta_{ja},\gamma_{ja})) 
= P( U > (\log t - \beta_{ja}^{\top} X) / \gamma_{ja} \mid X=x),
\end{align*} 
where the independent random variable $U$ follows the standard normal distribution. 

Moreover, to identify the joint distribution of survival times through associated marginal distributions, we tend to apply the method of copula link-based modeling \citep{Marra02042020}. 
In particular, let the \textit{conditional joint survival function} of $(T_1(a),T_2(a))$ given $X=x$ at time point $(t_1, t_2)$ be 
\begin{align*}
& S(t_1, t_2, x, a; \eta_a) 
=  L(S_1(t_1, x, a; (\beta_{1a},\gamma_{1a})), S_2(t_2, x, a; (\beta_{2a},\gamma_{2a})); \theta_a),
\end{align*} 
where  the link function $L(\cdot,\cdot,\theta) : (0, 1)^2 \to (0, 1)$ is parameterized by $\theta \in \mathbb{R}$, and 

$\eta_a := (\beta_{1a}, \beta_{2a}, \gamma_{1a}, \gamma_{2a}, \theta_a)$ represents all unknown parameters.
Since the true relationship between survival functions is unknown in practice, we employ cross-validation to empirically select the optimal link function from the candidates summarized in \cite{Marra02042020}. 
Specifically, the selection criterion identifies the link function that yields the lowest empirical risk.

Under Assumptions \ref{as1} (i) and (ii), the conditional survival functions of the potential outcomes $T_j(a)$ can be expressed as those of the observed responses $T_j$ given $X$ and treatment $A=a$. 
Consequently, these counterfactual quantities are identifiable from the observed data.

\subsection{Optimal individualized treatment rule}
Let $D: \mathcal{X} \to \mathcal{A}$ denote the treatment rule, which assigns treatment based on patient characteristics. 
Our goal is to identify an optimal individualized treatment rule $D_0$ that maximizes the expected survival probability at $(t_1^*,t_2^*)$, that is,
$$\max_D E[P(T_1(D) > t_1^*, T_2(D) > t_2^* \mid X)].$$
Common approaches for deriving ITRs with univariate right-censored outcomes include weighting and A-learning methods. 
However, it is difficult to generalize these existing methods to accommodate bivariate survival outcomes.

To overcome the above challenges, we model the treatment decision as a stochastic policy with a conditional distribution $\mathcal{D}(X):\mathcal{X} \rightarrow (0,1)^{K+1}$ given covariates $X$ \cite{fang2023fairness}.  
Define the individualized treatment distribution (ITD) of the optimal decision variable $D_0$ as $\mathcal{D}_{0}(X) = (d_{0}(0,X), \dots, d_{0}(K,X))^\top$, where $d_{0}(a,x)=P(D_0 = a \mid X=x)$. 
This distribution satisfies
\begin{align}\label{C2}
\mathcal{D}_{0}=&\argmin_{d} - E[\mathcal{S}(t_1^*, t_2^*, X; \eta^*)^\top d(X)],
\end{align} 
where 
\begin{align*}
&\mathcal{S}(t_1^*, t_2^*, x; \eta^*)  
= (S(t_1^*, t_2^*, x, 0, \eta_0), \dots, S(t_1^*, t_2^*, x, K, \eta_K))^\top,
\end{align*}
and $\eta^* = (\eta_0,\dots,\eta_K)$.
The optimal ITR therefore can be expressed as $D_0(x):= \argmin_{a \leq K} -d_0(a, x)$. 
Treating the optimal ITR as a conditional distribution of an optimal decision variable given covariates overcomes several difficulties of computational aspects to nonconvex optimization problems from classical settings \citep{fang2023fairness}.  
The proposed approach improves the establishment of ITR by accommodating uncertainty and heterogeneity in treatment effects.

\section{Estimation procedure}  \label{Sc:estimation}
In this section, we present a multi-stage estimation procedure for the decision-making tasks.
We briefly outline significant advantages of the proposed method as below:
\begin{itemize}
\item \textbf{Reliable estimation: } 
We propose an APP estimation approach for estimating the conditional potential joint survival function under the potential outcome framework with  right censoring, which directly improves the reliability of optimal decision rule learning though combining modern machine learning methods. 

\item \textbf{Flexibility: } 
We can employ any advanced black-box machine learning (ML) model to generate predictions. 
Even if this ML model is biased, selecting concrete weights in the estimating equation guarantees that the target estimators of parameters remain consistent.

\item \textbf{Computational efficiency: }
Once the survival parameters are estimated, the policy learning phase benefits from modern deep learning capabilities, such as stochastic policy and differentiability. 
In particular, different from traditional decision rules based on indicator function, which are non-differentiable, the proposed method uses DNNs with the softmax transformation to generate a probabilistic treatment assignment. 
This renders the objective function smooth and differentiable, enabling the use of efficient gradient-based optimization algorithms to handle complex estimation tasks.
\end{itemize}

\subsection{Adaptive prediction-powered estimation of joint survival functions} \label{Sc4.1}

In this study, we apply machine learning methods and prediction-powered inference method, and propose the APP estimation method for $\beta_{ja}$, $j=1,2$, $a\in\mathcal{A}$.

Beginning with a classical estimation strategy for $(\beta_{ja},\gamma_ja)$ in \eqref{AFT}, the standard IPCW least squares criterion for right-censored data \cite{li2003empirical}, for $j=1,2$ and $a\in\mathcal{A}$, we have \begin{align}\label{naive}
&(\hat{\beta}_{ja},\hat{\gamma}_{ja})  
=  \argmin_{(\beta,\gamma)} \sum_{i=1}^{n_a}\frac{ \Delta_i}{ \hat{G}_{ja}(Y_{ji},X_i)}  \frac{(\log Y_{ji} - \beta^{\top} X_i)^2}{\gamma^2},
\end{align} 
where $\hat{G}_{ja}$ is an estimator of $G_{ja}(t,x) := P(\Delta_j = 1 \mid T_j=t,A=a,X=x)$ \cite{cui2023estimating}. 

The above formulation can be interpreted through a missing data perspective: there exists an informative missing mechanism $P(\Delta = 1 \mid T, X)$ managing the observation of the label, which is determined by the conditional distribution of $C$.
Consequently, information regarding the censoring time $C$ helps the identification of this latent mechanism, analogous to the role of the propensity score in causal inference for observational studies \cite{rosenbaum1983central}.
On the other hand, in randomized controlled trials, the estimator of $\beta_{ja}$ captures the causality between covariates and potential outcomes.
Consequently, imputations based on the desirable $\hat{\beta}_{ja}$ and observed explanatory variables can contribute to decision learning tasks. 

Moreover, we can develop more efficient estimators through prescriptive of the prediction-powered estimations \citep{angelopoulos2023prediction, zrnic2024cross}.
For $j=1,2$ and $a\in\mathcal{A}$, letting the score function be
\begin{align*}
&\psi_{ja}^* (\beta,f,G,\kappa;O,c)\\
=  & \frac{X\Delta}{G(Y_j,X)}( \log Y_j - X^{\top} \beta )  +  c_1 \frac{I(A=a)}{\kappa} \{X(f(X)-X^{\top} \beta)\}  \nonumber\\ 
&+  c_2 \frac{I(A\neq a) }{1-\kappa} \{X(f(X)-X^{\top} \beta)\}  ,
\end{align*}
we propose an estimator of $\beta_{ja}$, denoted by $\hat{\beta}^{APP}_{ja}$, which is obtained by solving the following estimating equation with respect to $\beta$
\begin{align} \label{PLS2} 
& \mathbb{P}_n \psi_{ja}^* (\beta,\hat{f}^{(a)}_{jn},\hat{G}_{ja},\hat{\kappa}_n;\mathcal{O},c)=0.
\end{align}   
Here, $\hat{\kappa}_n $ is an estimator of $\kappa_a:=P(A=a)$. 
$c=(c_1,c_2) $ is the user-specified weight, and $\hat{f}_{jn}^{(a)}:=\hat{E}[\log T_{j}(a) \mid X=x ]$ is developed by machine learning methods, which approximates the true function $f_{j0}^{(a)}(x):=E[\log T_{j}(a) \mid X=x]$.
Further, $\hat{\kappa}_n$, the estimator of the probability $P(A=a)$, can be computed as the empirical frequency of the event observed in $\mathcal{O}$.  

\begin{remark}[Estimation methods for $G_{ja}$]
Various methods exist for estimating the conditional survival probability of the censoring time given covariates, including logistic regression \cite{matsouaka2020regression}, Cox Proportional Hazards (CPH) modeling \cite{cox1975partial}, local Kaplan–Meier estimation \cite{tang2020penalized}, and machine learning approaches \cite{JMLR}.
\end{remark}

\begin{remark}[Estimation methods for $f_0$]
One way to obtain $\hat{f}_{jn}^{(a)}$ is to solve following $M$-estimation problem:
\begin{align}\label{PN}
\hat{f}_{jn}^{(a)} = \argmin_{f\in\mathcal{F}} \frac{1}{n_a}\sum_{i=1}^{n_a} \biggl\{ \frac{\Delta_{ji}}{\hat{G}_{ja}(Y_{ji},X_i)} (\log Y_{ji}-f(X_i))^2 \biggr\},
\end{align}
where $\mathcal{F}$ is the family of approximation tools, such as neural networks, random forest and others.    
Here we assume that the estimator $\hat{f}_{jn}^{(a)}$ converges to $f_{j0}^{(a)} $.
This is not a strong assumption as there exist several theoretical guarantees about the error bounds of machine learning methods in nonparametric estimation tasks, and these theories demonstrate that the key hyperparameters can be adjusted suitably so that the required convergence rate is achievable. 
\end{remark}

\begin{remark}[The role of weight $c$]
Straightforward calculation implies that when $c=(1,-1)$ or $c=(-1,1)$, $E[\psi_{ja}^* (\beta_{ja},f,G_0,\kappa_a;\mathcal{O},c)] = 0$ holds for any measurable function $f:\mathcal{X}\rightarrow \mathbb{R}$.
In this case, the estimating equation $\psi_{ja}^*$ involves a debias procedure with respect to $f$, as similarly presented by \cite{angelopoulos2023prediction, zrnic2024cross} in the semi-supervised settings, which presents the most conservative estimation strategy for $\beta_{ja}$.

On the other hand, if the estimator $\hat{f}^{(a)}_{jn}$ is consistent, we can develop more efficient estimator $\hat{\beta}_{ja}^{APP}$ through selecting a weight.
In the simulation study, we provide some valid selections of $c$ and evaluate their finite-sample performance.
\end{remark}

After developing $(\hat{\beta}_{10},\dots,\hat{\beta}_{1K})$ and $(\hat{\beta}_{20},\dots,\hat{\beta}_{2K})$, supposing that censoring times $C_1$ and $C_2$ are conditionally independent given $X$, we can fit copula models through  
\begin{align}\label{EST1}
\hat{\theta}_a=\argmin_{\theta}- \sum_{i=1}^{n}&\frac{ \Delta_{1i}\Delta_{2i}I(A_i=a)}{\hat{G}_{1a}(Y_{1i},X_i)\hat{G}_{2a}(Y_{2i},X_i)} \nonumber\\
&\cdot\log \bigg[\frac{\partial^2}{\partial t_{1} \partial t_{2}} S(Y_{1i},Y_{2i},X_i, a,(\hat{\beta}_{1a}, \hat{\beta}_{2a}, \hat{\gamma}_{1a},\hat{\gamma}_{2a}, \theta))\bigg],
\end{align}
where $S(t_1,t_2,x, a,(\hat{\beta}_{1a}, \hat{\beta}_{2a},\hat{\gamma}_{1a},\hat{\gamma}_{2a}, \theta)) = L(S(t_1,x,a; (\hat{\beta}_{1a},\hat{\gamma}_{1a})), S(t_2,x,a; \hat{\beta}_{2a},\hat{\gamma}_{2a}); \theta)$.

\subsection{Optimal decision distribution learning} \label{Sc4.2}
In this section, we present a procedure for estimating the optimal decision distributions.

First, to validly model the ITD, we introduce the softmax function $\sigma_S$. 
For $y=(y_0,\dots,y_K)^{\top} \in\mathbb{R}^{K+1}$,
\begin{align*}
\sigma_S(y) := (  [\sigma_S(y)]_0,\dots,[\sigma_S(y)]_K)^{\top} :\mathbb{R}^{K+1}\rightarrow [0,1]^{K+1},
\end{align*}
where $[\sigma_S(y)]_a =  e^{y_a}/\sum_{i=0}^{K}e^{y_i}$, $a\in\mathcal{A}$ such that $\sum_{i=0}^{K} [\sigma_S(y)]_i = 1$.   

The softmax function is widely used in large language model training, mapping character encoding information to probability distribution introduces uncertainty into text generation decisions, thereby improving the generalization performance of the model.

Figure \ref{EP} displays the decision-making process using a combination of neural networks and the Softmax function:
\begin{itemize}
\item[(i)] Deep neural networks are applied to model the latent relationships between covariates and scores, values of which represent the tendency of treatment assignment.

\item[(ii)] Softmax function is integrated to the outputs, so that results can be interpreted as the conditional distribution of the optimal decision variable.

\item[(iii)]  Eventually, the predicted values are used to make decisions: the treatment plan corresponding to the maximum value is recommended and adopted (e.g. =0 shown in the figure).
\end{itemize}

\begin{figure} 
\begin{center}
\includegraphics[width=0.8\linewidth]{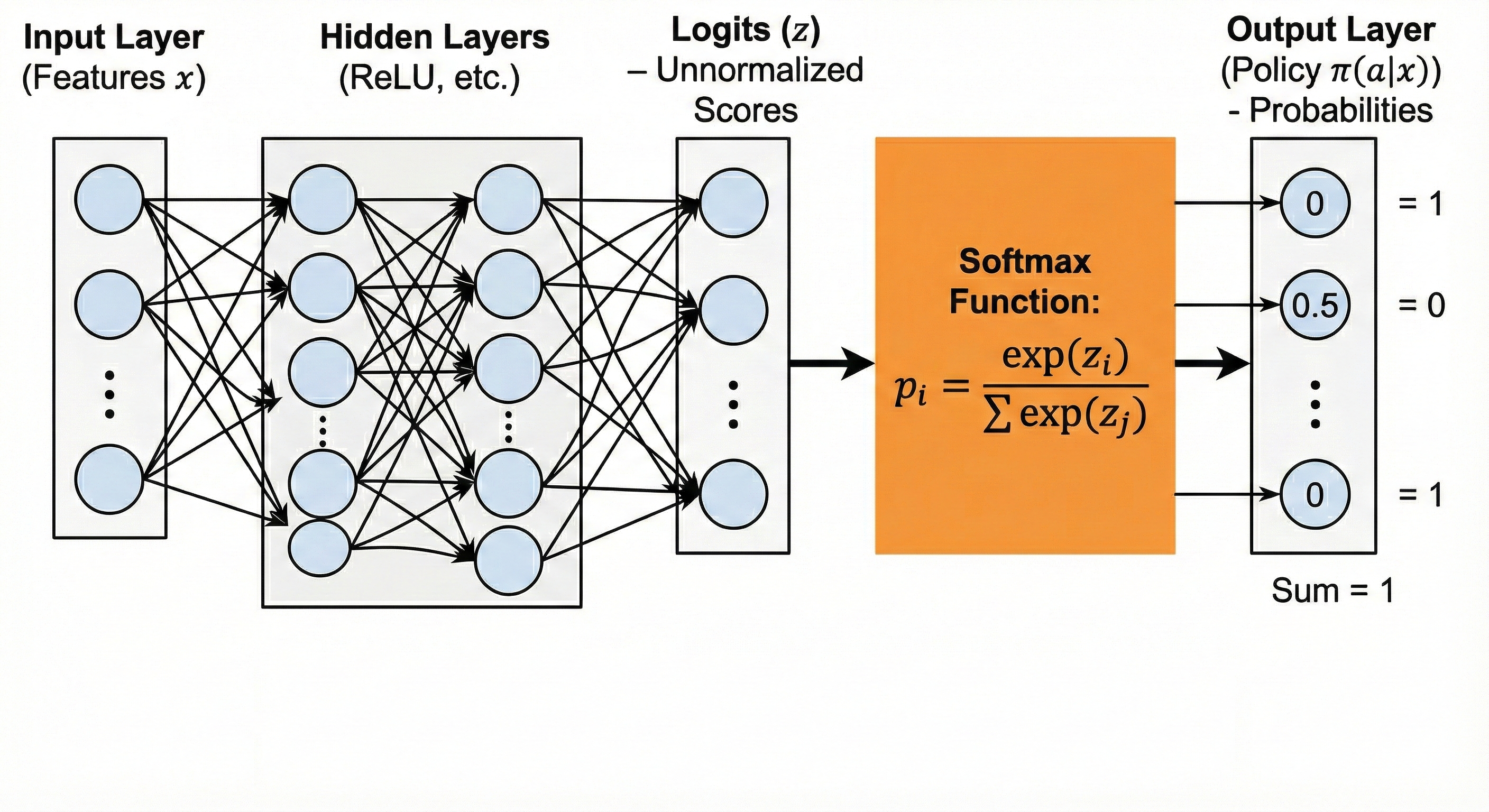}
\caption{Decision making by combining deep neural networks with the softmax function for multiple treatments.}
\label{EP}
\end{center}
\end{figure}  

In this study, we apply $(K+1)$-outcome DNNs with ReLU activation functions to estimate $\mathcal{D}_0$ for establishing the optimal decision distribution. 
For a neural network with depth $\mathcal{D}$ and width $\mathcal{W}$, let $(A_1,\ldots,A_{\mathcal{D}})$ and $(V_1,\ldots,V_{\mathcal{D}})$ denote the weight matrices and bias vectors, respectively.  
Then the DNNs with input $x $ and ReLU activation function $\sigma_j(x):=\max\{x,0\}$, $j=1,\ldots,\mathcal{D}-1$ can be expressed as $H(x):= A_{\mathcal{D}}\sigma_{\mathcal{D}-1}\big(\ldots\sigma_1\big(A_1x+V_1\big)\ldots\big)+V_{\mathcal{D} } .$
Let $\|x\|_{\infty}$ denote the maximum norm of vector $x$.
We assume that $A_j\in\mathcal{M}_j:=\{A\in \mathbb R^{d_j\times d_{j-1}}:|A_{kl}|\leq1,1\leq k\leq d_j,1\leq l\leq d_{j-1}\}$ and $V_j\in\mathcal{V}_j:=\{V\in\mathbb R^{d_j}:\|V\|_{\infty}\leq 1\} ,$ for each $ j=1,\ldots,\mathcal{D}.$
We denote the family of the uniformly bounded DNNs with $d_0=p,\ d_{\mathcal{D}}=K+1$ and sufficiently large constant $B$ as 
$\mathcal{F}(B):=\{ H(x): \mathbb{R}^{d_0}\rightarrow \mathbb{R}^{d_{\mathcal{D}+1}}, A_j\in\mathcal{M}_j,\ V_j\in\mathcal{V}_j,\ \forall j=1,\ldots,\mathcal{D}+1, \sup_{x \in \mathcal{X}}|H(x)| \leq B \}$.  

Consequently, combing the softmax function, we have following estimator,  
\begin{equation}\label{EST_D2}
\hat{h}_n = \argmin_{h\in \mathcal{F}(B)} -\sum_{i=1}^{n}[\mathcal{ S}(t_1^*,t_2^*, X_i;\hat{\eta})^\top \sigma_S(h(X_i))],
\end{equation}   
where $\hat{\eta}$ is an estimator of $\eta^*$.
Such an estimator $\hat{h}_n$ equivalently maximizes the empirical value based on the estimated survival function at $(t_1^*,t_2^*)$ and completes a multi-label classification task related to identify ranks of potential joint conditional survival probabilities. 

The estimated decision $\hat{D}: \mathcal{X} \rightarrow \mathcal{A}$ for an individual with $X=x$ is constructed as
\begin{align}\label{Dec}
\hat{D}(x) = -\argmin_{a\leq k} [\hat{h}(x)]_a,
\end{align}
where $[\hat{h}(x)]_j$ is the $j$th component of vector $\hat{h}_n(x)$. 
Similar idea is applied by \cite{lou2018optimal} for identifying the optimal treatment assignment with multiple treatments.
We provide more explanatory function $h_a$, $a\in\mathcal{A}$, which contributes to characterize the optimal ITD.
We also separate decision specification from optimal policy distribution training, avoiding direct optimization of indicator functions or variants to optimize computational procedures.

\subsection{Algorithm}
Combining the estimation procedures shown in Subsections \ref{Sc4.1} and \ref{Sc4.2}, we provide Algorithm \ref{alg:app_itr}.

\begin{algorithm}[H]
\caption{Deep Optimal ITR with Adaptive Prediction-Powered (APP) Estimation}
\label{alg:app_itr}
\begin{algorithmic}[1]
\STATE {\bfseries Input:} Observed data $\mathcal{O}=\{(Y_{1i}, Y_{2i}, \Delta_{1i}, \Delta_{2i}, X_i, A_i)\}_{i=1}^n$, target time $(t_1^*, t_2^*)$, weights $c=(c_1,c_2)$.
\STATE {\bfseries Output:} Optimal Decision Rule $\hat{D}(x)$.

\STATE \textbf{// Phase 1: Nuisance and Marginal Parameter Estimation (Section \ref{Sc4.1})}
\FOR{each treatment $a \in \{0, \dots, K\}$}
\FOR{each outcome $j \in \{1, 2\}$}
\STATE Develop $\hat{G}_{ja}$.
\STATE Train stable predictor $\hat{f}_{jn}^{(a)}$.
\STATE Calculate $\hat{\kappa}_n$.
\STATE Compute the APP estimator $\hat{\beta}_{ja}^{APP}$ by solving \eqref{PLS2} with the user-specified weight $c$:
\STATE $\quad \mathbb{P}_n \psi_{ja}^* (\beta,\hat{f}^{(a)}_{jn},\hat{G}_{ja},\hat{\kappa}_n;\mathcal{O},c)=0$ for $\beta$.
\STATE Calculate estimators of $\gamma_{ja}$ via \eqref{naive}. 
\ENDFOR
\STATE \textbf{// Phase 2: Joint Dependence Estimation (Section \ref{Sc4.1})}
\STATE Estimate Copula parameter $\hat{\theta}_a$ by cross validation and maximizing pseudo-likelihood \eqref{EST1}.
\ENDFOR

\STATE \textbf{// Phase 3: Optimal Decision Distribution Learning (Section \ref{Sc4.2})}
\STATE Initialize Deep Neural Network $h: \mathcal{X} \rightarrow \mathbb{R}^{K+1}$.
\STATE Define Softmax policy: $\pi(a|x) = [\sigma_S(h(x))]_a$.
\STATE Construct the estimator of joint survival function: $\mathcal{S}(t_1^*, t_2^*, x;\hat{\eta})= (S(t_1^*, t_2^*, x, 0; \hat{\eta}_0), \dots,S(t_1^*, t_2^*, x, K; \hat{\eta}_K))$.
\WHILE{not converged}
\STATE Sample batch $\mathcal{B}$ from data.
\STATE Update $h$ by maximizing empirical value \eqref{EST_D2}:
\STATE $ \sum_{i \in \mathcal{B}} \{ \sum_{a=0}^K S(t_1^*, t_2^*, X_i, a;\hat{\eta}_a)  [\sigma_S(h(X_i))]_a \}$.
\ENDWHILE

\STATE \textbf{Return} Optimal ITR: $\hat{D}(x) = \arg\max_{a} [\hat{h}(x)]_a$.
\end{algorithmic}
\end{algorithm}

\section{Theoretical properties}
\label{sec:theory}

In this section, we establish the asymptotic properties of the proposed estimator and the learned policy. We focus on two main results: 
(i). the consistency of the proposed APP estimator for the marginal and copula parameters; and 
(ii). the risk consistency of the estimated optimal treatment rule.

\subsection{Assumptions}

We impose the following regularity conditions, which are standard in survival analysis and empirical risk minimization.

\begin{assumption}[Regularity of the Survival Model]
\label{ass:regularity}
\begin{itemize}
\item[(i)] The space that contains true parameter $\eta_{a} = (\beta_{1a}, \beta_{2a}, \gamma_{1a},\gamma_{2a}, \theta_{a})$ is compact. 
\item[(ii)] The covariate vector $X\in\mathcal{X}$ is bounded. The matrix $E[XX^\top]$ is non-singular.
\item[(iii)] The censoring time $(C_1,C_2)$ is independent of $(T_1, T_2)$ conditional on $X$ and $A$. 
The probability of being uncensored is uniformly bounded away from zero: $P(\Delta_j =1 \mid X,A,T_j) > \delta $ with a positive constant $\delta$ for $j=1,2$.
\end{itemize}
\end{assumption}

\begin{assumption}[Stability of the ML Predictor]
\label{ass:predictor}
The machine learning predictor $\hat{f}$ belongs to a function class $\mathcal{F}$ with finite complexity, such as finite VC-dimension or Rademacher complexity. 
Assume that $\hat{f}$ converges in probability to a limiting function $f^*$, which does not necessarily equal the true function.
\end{assumption}

\begin{assumption}[Identifiability of Copula]
\label{ass:copula}
The Copula density function $c_\theta(u, v) = \partial^2 L_\theta(u,v)/\partial u \partial v$ is twice continuously differentiable with respect to $\theta$. The Fisher information matrix for the pseudo-likelihood is positive definite.
\end{assumption}

Here Assumption \ref{ass:regularity} is standard in the survival analysis \cite{marra2020copula,matsouaka2020regression,tang2020penalized}.
Assumption \ref{ass:predictor} is proposed by \cite{zrnic2024cross}, which ensures prediction-powered inference.
Assumption \ref{ass:copula} is sufficient to guarantee the identifiability of Copula model, which can be satisfied by several copula link functions \cite{marra2020copula}.

\subsection{Consistency of the APP Estimator}

First, we show that the APP estimator for the marginal parameters $\beta_{ja}$ is consistent despite potentially biased machine learning predictions.

\begin{theorem}[Consistency of Estimators]
\label{thm:consistency_beta}
Suppose that Assumption \ref{ass:regularity} holds.

\begin{enumerate}
\item[(i)] Under Assumption \ref{ass:predictor}, suppose that $\hat{G}_{ja}$ is a consistent estimator of $G_{ja}$. Then, for any treatment $a \in \mathcal{A}$ and outcome $j \in \{1, 2\}$, the APP estimator $\hat{\beta}_{ja}^{APP}$ with weights $c=(1,-1)$ or $c=(-1,1)$ satisfies:
$$ \hat{\beta}_{ja}^{APP} = \beta_{ja} + o_p(1). $$

\item[(ii)] Suppose that $\hat{G}_{ja}$ and $\hat{f}_{jn}^{(a)}$ are consistent estimators of $G_{ja}$ and $f_{j0}^{(a)}$, respectively. Then, for any treatment $a \in \mathcal{A}$, outcome $j \in \{1, 2\}$, and any weight vector $c$, the APP estimator satisfies:
$$ \hat{\beta}_{ja}^{APP} = \beta_{ja} + o_p(1). $$
\end{enumerate}
\end{theorem}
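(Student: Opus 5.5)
The plan is to exploit the fact that the estimating equation \eqref{PLS2} is \emph{linear} in $\beta$, so $\hat{\beta}^{APP}_{ja}$ has a closed form. I read the IPCW term as restricted to the treatment-$a$ group, i.e.\ carrying $I(A=a)/\hat{\kappa}_n$, as in \eqref{naive}. Writing $\mathbb{P}_n\psi^*_{ja}(\beta,\hat f^{(a)}_{jn},\hat G_{ja},\hat\kappa_n;\mathcal{O},c)=\hat{U}_n-\hat{B}_n\beta$, we have
\begin{align*}
\hat{B}_n&=\mathbb{P}_n\Big[\tfrac{I(A=a)\Delta_j}{\hat\kappa_n\hat G_{ja}(Y_j,X)}XX^\top\Big]+c_1\mathbb{P}_n\Big[\tfrac{I(A=a)}{\hat\kappa_n}XX^\top\Big]+c_2\mathbb{P}_n\Big[\tfrac{I(A\neq a)}{1-\hat\kappa_n}XX^\top\Big],\\
\hat{U}_n&=\mathbb{P}_n\Big[\tfrac{I(A=a)\Delta_j}{\hat\kappa_n\hat G_{ja}(Y_j,X)}X\log Y_j\Big]+c_1\mathbb{P}_n\Big[\tfrac{I(A=a)}{\hat\kappa_n}X\hat f^{(a)}_{jn}(X)\Big]+c_2\mathbb{P}_n\Big[\tfrac{I(A\neq a)}{1-\hat\kappa_n}X\hat f^{(a)}_{jn}(X)\Big].
\end{align*}
Hence $\hat\beta^{APP}_{ja}=\hat B_n^{-1}\hat U_n$ whenever $\hat B_n$ is invertible. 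The proof then reduces to three steps: identify the probability limits $B$ and $U$, show $\hat B_n\to_p B$ and $\hat U_n\to_p U$, and apply the continuous mapping theorem.

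\textbf{Step 1: population limits.} Under Assumption \ref{as1} (randomization, so $A\perp X$), $\kappa_a\in(c,1-c)$, and Assumption \ref{ass:regularity}(iii), conditioning on $(X,A=a,T_j)$ gives
\[
E\big[I(A=a)\Delta_j/G_{ja}(T_j,X)\mid X,T_j(a)\big]=\kappa_a .
\]
Therefore the first term of $\hat B_n$ converges to $E[XX^\top]$ and the first term of $\hat U_n$ to $E[X\log T_j(a)]$. By the AFT model \eqref{AFT} with mean-zero normal $\epsilon_{ja}$, the latter equals $E[XX^\top]\beta_{ja}$. Since $A\perp X$, the two $c$-terms converge to $(c_1+c_2)E[XX^\top]$ and $(c_1+c_2)E[Xf^*(X)]$. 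This yields
\[
B=(1+c_1+c_2)E[XX^\top],\qquad U=E[XX^\top]\beta_{ja}+(c_1+c_2)E[Xf^*(X)].
\]

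\textbf{Step 2: the two cases.} In case (i), $c_1+c_2=0$, so the unknown limit $f^*$ drops out. Then $B=E[XX^\top]$, which is nonsingular by Assumption \ref{ass:regularity}(ii), and $B^{-1}U=\beta_{ja}$. In case (ii), $f^*=f^{(a)}_{j0}$ and $f^{(a)}_{j0}(x)=x^\top\beta_{ja}$, since $E[\epsilon_{ja}]=0$. Hence $U=(1+c_1+c_2)E[XX^\top]\beta_{ja}$ and again $B^{-1}U=\beta_{ja}$. This requires $1+c_1+c_2\neq0$, which I will state as the implicit non-degeneracy condition on $c$; at $1+c_1+c_2=0$ the equation does not identify $\beta$.

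\textbf{Step 3: plug-in convergence (main obstacle).} The hard part is replacing $(\hat G_{ja},\hat f^{(a)}_{jn},\hat\kappa_n)$ by their limits when the nuisances are estimated from the same sample. Three ingredients handle this.
\begin{itemize}
\item For $\hat\kappa_n$, the law of large numbers gives $\hat\kappa_n\to_p\kappa_a$, and $\kappa_a$ is bounded away from $0$ and $1$.
\item For $\hat G_{ja}$, I will interpret consistency as $\sup_{t,x}|\hat G_{ja}-G_{ja}|=o_p(1)$. Together with $G_{ja}>\delta$, this gives $\hat G_{ja}>\delta/2$ with probability tending to one. Boundedness of $X$ then bounds the difference between the estimated and oracle weighted averages by $O_p(1)\cdot\sup|\hat G-G|\cdot\mathbb{P}_n\big[\|X\|^2+\|X\||\log Y_j|\big]=o_p(1)$. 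This step also needs $E|\log T_j(a)|<\infty$, which follows from the normal AFT errors.
\item For $\hat f^{(a)}_{jn}$, Assumption \ref{ass:predictor} makes $\{x\mapsto xf(x): f\in\mathcal{F}\}$ (times bounded indicators) a Glivenko--Cantelli class. This gives $\sup_{f\in\mathcal{F}}|(\mathbb{P}_n-P)[Xf(X)I(A=a)]|=o_p(1)$, and then $P[X(\hat f-f^*)(X)I(A=a)]\to_p0$ by consistency of $\hat f$ in $L_2(P)$ together with Cauchy--Schwarz. If only pointwise convergence is available, I would instead invoke a cross-fitting version, as in \cite{zrnic2024cross}.
\end{itemize}
Combining these, $\hat B_n\to_p B$ and $\hat U_n\to_p U$. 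Continuity of matrix inversion at the nonsingular $B$ then gives $\hat\beta^{APP}_{ja}=\beta_{ja}+o_p(1)$ in both parts of Theorem \ref{thm:consistency_beta}.
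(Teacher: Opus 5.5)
Your proposal is correct and follows essentially the same route as the paper. You write $\hat\beta^{APP}_{ja}$ in closed form from the linear estimating equation, obtain the probability limit of each block as in \eqref{EQ0}--\eqref{EQ3} (separating the plug-in error of $\hat G_{ja}$ from the oracle law-of-large-numbers term), and conclude by continuity of matrix inversion, with $c_1+c_2=0$ removing the unknown limit $f^*$ in part (i). Your extra care is well placed: the condition $1+c_1+c_2\neq 0$ you impose in part (ii) is genuinely necessary, since the limiting matrix is $(1+c_1+c_2)E[XX^\top]$, and the paper's claim for ``any weight vector $c$'' silently overlooks this.
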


Theorem \ref{thm:consistency_beta} implies that there exists a trade-off between robustness and efficiency.
In particular, when $c=(1,-1)$ or $c=(-1,1)$, the estimation procedure parallels prediction-powered estimation, meaning that biases induced by $\hat{f}^{(a)}_{jn}$ can be corrected. 
However, the variance information captured by $\hat{f}^{(a)}_{jn}$ is ignored during this debiasing process, resulting in a loss of statistical efficiency.
Therefore, the weights can be adjusted based on the user's level of trust in the machine learning method on the concrete dataset.
For instance,  when the prediction performance of estimator $\hat{f}^{(a)}_{jn}$ on the test set falls short of expectations, we can choose more robust weights to reduce potential risks.

Then, the consistency of estimator $\hat{\gamma}_{ja}$ obtained by \eqref{naive} follows from the similar proofs of Theorem \ref{thm:consistency_beta} and thus omitted. 

\begin{corollary}[Consistency of Copula Parameter]\label{co1}
Given consistent estimators $\hat{\beta}^{APP}_{ja}$, $\hat{\gamma}_{ja}$ and Assumption \ref{ass:copula}, the two-stage estimator $\hat{\theta}_a$ obtained by \eqref{EST1} converges to the true value $\theta_a$ in probability, i.e., $\hat{\theta}_a = \theta_a + o_p(1)$ for any $a\in\mathcal{A}$.
\end{corollary}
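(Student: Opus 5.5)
The plan is to treat $\hat{\theta}_a$ as a two-stage M-estimator whose first stage plugs in the nuisance estimate $\hat{\xi}_a := (\hat{\beta}^{APP}_{1a},\hat{\beta}^{APP}_{2a},\hat{\gamma}_{1a},\hat{\gamma}_{2a},\hat{G}_{1a},\hat{G}_{2a})$. We then apply the standard argmin consistency theorem (e.g., van der Vaart, Theorem 5.7). Write
\[
M_n(\theta,\xi)=-\frac1n\sum_{i=1}^n \frac{\Delta_{1i}\Delta_{2i}I(A_i=a)}{G_{1}(Y_{1i},X_i)G_{2}(Y_{2i},X_i)}\log s(Y_{1i},Y_{2i},X_i;\beta,\gamma,\theta),
\]
where $s$ is the mixed partial derivative $\partial^2 S/\partial t_1\partial t_2$ in \eqref{EST1}. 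Let $M(\theta,\xi)=E[M_n(\theta,\xi)]$, and let $\xi_a$ denote the true nuisance value. It suffices to prove two things: uniform convergence $\sup_{\theta}|M_n(\theta,\hat{\xi}_a)-M(\theta,\xi_a)|=o_p(1)$, and well-separatedness of the minimizer $\theta_a$ of $M(\cdot,\xi_a)$. The latter uses that $\theta$ ranges over a compact set by Assumption \ref{ass:regularity}(i).

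\textbf{Identification of $\theta_a$.} By Assumption \ref{ass:regularity}(iii), $C_1$ and $C_2$ are conditionally independent of each other (as assumed before \eqref{EST1}) and of $(T_1,T_2)$ given $(X,A)$. Hence
\[
E\bigl[\Delta_1\Delta_2 \mid T_1,T_2,X,A=a\bigr]=G_{1a}(T_1,X)G_{2a}(T_2,X).
\]
The IPCW weights therefore turn $M(\theta,\xi_a)$ into $-\kappa_a E[\log s(T_1(a),T_2(a),X;\beta_a,\gamma_a,\theta)\mid A=a]$, using Assumption \ref{as1}. By the Kullback–Leibler (Gibbs) inequality this is minimized at the true $\theta_a$. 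Positive definiteness of the Fisher information in Assumption \ref{ass:copula} makes $\theta_a$ a locally strict minimizer. Together with the paper's identifiability reading of Assumption \ref{ass:copula}, continuity of $M$, and compactness, this gives a unique, well-separated minimum.

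\textbf{Uniform convergence.} Split the difference into
\[
\sup_\theta|M_n(\theta,\hat{\xi}_a)-M_n(\theta,\xi_a)| + \sup_\theta|M_n(\theta,\xi_a)-M(\theta,\xi_a)|.
\]
For the second term, show that $\{\log s(\cdot;\beta_a,\gamma_a,\theta):\theta\in\Theta\}$ is Glivenko–Cantelli. The function is continuous in $\theta$ by Assumption \ref{ass:copula}, and it is dominated by an integrable envelope. The envelope uses boundedness of $X$ from Assumption \ref{ass:regularity}(ii), Gaussian marginal densities with $\gamma$ bounded away from zero, and the weights bounded by $\delta^{-2}$ from Assumption \ref{ass:regularity}(iii). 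A ULLN for parametric classes then applies.

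For the first term, use a mean-value/Lipschitz argument in $(\beta,\gamma)$ that is uniform over $\theta$ in the compact set. Combined with $\hat{\beta}^{APP}_{ja}\to\beta_{ja}$ (Theorem \ref{thm:consistency_beta}) and $\hat{\gamma}_{ja}\to\gamma_{ja}$, this bound is $o_p(1)$. For the weights, $\sup|1/\hat{G}_{ja}-1/G_{ja}|=o_p(1)$ follows from consistency of $\hat{G}_{ja}$ (taken in sup-norm) and the lower bound $\delta$. The argmin theorem then yields $\hat{\theta}_a=\theta_a+o_p(1)$.

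\textbf{Main obstacle.} The hardest step is the envelope and Lipschitz control of $\log s$. The log copula density can be unbounded near the boundary of $(0,1)^2$, and the derivative of $\log s$ in $(\beta,\gamma)$ involves $\log Y_j$ and the copula score, so these must be integrable. My first approach would be to truncate at $\log Y_j$ in a compact range, which is justified because $X$ is bounded and the compact parameter space keeps the marginal survival values away from $0$ and $1$ on such ranges. I would then assume, or verify for the candidate copulas of \cite{Marra02042020}, that $\sup_\theta|\partial_u\log c_\theta|$ and $\sup_\theta|\partial_v\log c_\theta|$ are integrable. The remaining tail is handled by a dominated convergence argument.
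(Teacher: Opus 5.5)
Your argument is correct as a sketch, but it does not follow the paper's route. The paper gives no argument and simply refers to Andersen (2005), i.e.\ to standard two-stage pseudo-likelihood theory: consistent first-stage marginal estimates plus regularity of the copula family give consistency of the second-stage dependence estimator.

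Your self-contained argmin-theorem proof differs substantively in the identification step. As far as I know, that two-stage theory is developed for the usual censored-data pseudo-likelihood, where censored pairs contribute through the copula and its partial derivatives. It is not developed for an inverse-probability-weighted complete-case criterion like \eqref{EST1}. Your computation $E[\Delta_1\Delta_2\mid T_1,T_2,X,A=a]=G_{1a}(T_1,X)G_{2a}(T_2,X)$ turns the population criterion into $-\kappa_a$ times the expected full-data log-density, so Gibbs' inequality places its minimum at $\theta_a$. That is exactly what transfers the argument to this estimator.

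Your route also exposes conditions the corollary leaves implicit:
\begin{itemize}
\item $C_1$ and $C_2$ must be conditionally independent given $(X,A)$, not merely given $X$ as the text says.
\item $\hat{G}_{ja}$ must converge in a norm strong enough to control the weights. You use the sup-norm, while Theorem~\ref{thm:consistency_beta} only posits $L_2$ consistency.
\item Positive-definite Fisher information gives only local identifiability, so well-separatedness also needs injectivity of $\theta\mapsto c_\theta$. This holds for the standard one-parameter families.
\item $\log c_\theta$ needs integrable envelopes.
\end{itemize}
The citation buys brevity; your argument buys a proof that actually matches \eqref{EST1}.

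One simplification for your main obstacle: the mixed partial in \eqref{EST1} equals $c_\theta(S_1,S_2)$ times the two marginal densities, which do not involve $\theta$. You can therefore drop them from $M_n$ without changing $\hat{\theta}_a$. The $\log Y_j$ terms then vanish, and the Lipschitz step needs only integrability of $\sup_\theta|\partial_u\log c_\theta(S_1,S_2)|\,\|\nabla_{(\beta_1,\gamma_1)}S_1\|$ and its $v$-analogue. For the usual families this is a Mills-ratio computation.
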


Please refer to \cite{andersen2005two} for the proof of Corollary \ref{co1}.

\subsection{Value consistency of the learned policy}

Let $V(d) = E[\mathcal{S}(t_1^*, t_2^*, X;\eta^*)^\top d(X)]$ be the expected value under policy $d$.  

\begin{theorem}[Value Consistency]
Under Assumptions \ref{ass:regularity}-\ref{ass:copula}, for $\hat{h}_n$ obtained by \eqref{EST_D2}, when estimators $\hat{\beta}_{ja}^{APP}$ and $\hat{\theta}_a$ are consistent, we have 
\label{thm:risk_consistency}  
$$ V(\mathcal{D}_0) = V(\sigma_{S}(\hat{h}_n))  + o_p(1). $$
\end{theorem}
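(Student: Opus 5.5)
The plan is the standard excess-risk decomposition for plug-in empirical risk minimization. Write $\mathcal{S}^*(x)=\mathcal{S}(t_1^*,t_2^*,x;\eta^*)$ and $\hat{\mathcal{S}}(x)=\mathcal{S}(t_1^*,t_2^*,x;\hat\eta)$. Define the empirical plug-in value $\hat V_n(d)=\mathbb{P}_n[\hat{\mathcal{S}}(X)^\top d(X)]$ and the population plug-in value $\tilde V(d)=E[\hat{\mathcal{S}}(X)^\top d(X)]$, with $\hat\eta$ held fixed. By \eqref{C2}, $V(\mathcal{D}_0)\ge V(\sigma_S(\hat h_n))$, so it suffices to show $V(\mathcal{D}_0)-V(\sigma_S(\hat h_n))\le o_p(1)$. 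Let $h^\dagger\in\mathcal{F}(B)$ be an approximant of the optimal rule; it is chosen in the third paragraph. We decompose
\begin{align*}
V(\mathcal{D}_0)-V(\sigma_S(\hat h_n)) ={}& \{V(\mathcal{D}_0)-V(\sigma_S(h^\dagger))\} + \{V(\sigma_S(h^\dagger))-\hat V_n(\sigma_S(h^\dagger))\} \\
&+ \{\hat V_n(\sigma_S(h^\dagger))-\hat V_n(\sigma_S(\hat h_n))\} + \{\hat V_n(\sigma_S(\hat h_n))-V(\sigma_S(\hat h_n))\}.
\end{align*}
The third term is $\le 0$ by the definition \eqref{EST_D2} of $\hat h_n$. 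The second and fourth terms are each bounded by $\sup_{h\in\mathcal{F}(B)}|\hat V_n(\sigma_S(h))-V(\sigma_S(h))|$. This supremum is split into a nuisance part, $\sup_h|\hat V_n-\mathbb{P}_n[\mathcal{S}^{*\top}\sigma_S(h)]|$, and an empirical-process part, $\sup_h|(\mathbb{P}_n-P)[\mathcal{S}^{*\top}\sigma_S(h)]|$.

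\textbf{Nuisance part.} Since $\|\sigma_S(h(x))\|_1=1$, this part is bounded by $\sup_{x\in\mathcal{X}}\max_a|S(t_1^*,t_2^*,x,a;\hat\eta_a)-S(t_1^*,t_2^*,x,a;\eta_a)|$. By Theorem \ref{thm:consistency_beta}, the omitted consistency of $\hat\gamma_{ja}$, and Corollary \ref{co1}, we have $\hat\eta_a\to\eta_a$ in probability. The map $(\eta,x)\mapsto L(S_1,S_2;\theta)$ is continuous: the normal survival function is smooth in $(\beta,\gamma)$ for $\gamma$ bounded away from zero (compactness, Assumption \ref{ass:regularity}(i)), and $L$ is continuous in $(u,v,\theta)$ by Assumption \ref{ass:copula}. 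On the compact set formed by the parameter space and the bounded $\mathcal{X}$ this map is therefore uniformly continuous. The continuous mapping theorem then gives the uniform $o_p(1)$ bound.

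\textbf{Empirical-process part.} The class $\{x\mapsto \mathcal{S}^*(x)^\top\sigma_S(h(x)):h\in\mathcal{F}(B)\}$ is uniformly bounded by $1$. The softmax map is $1$-Lipschitz from $\|\cdot\|_\infty$ to $\|\cdot\|_1$ up to a constant, and multiplication by the fixed bounded vector $\mathcal{S}^*$ is Lipschitz. Vector contraction therefore bounds the Rademacher complexity by a constant times that of the ReLU class $\mathcal{F}(B)$ with bounded weights. For fixed depth and width, or width and depth growing slowly with $n$, that complexity is $O(\sqrt{\mathcal{D}\mathcal{W}^2\log n/n})\to 0$, for instance via covering numbers of bounded-weight networks. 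A symmetrization argument with a bounded-differences inequality then gives the $o_p(1)$ bound.

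\textbf{Approximation term.} The first term, $V(\mathcal{D}_0)-V(\sigma_S(h^\dagger))$, is the main obstacle, because $\mathcal{D}_0$ is a degenerate argmax distribution that no bounded softmax attains exactly. I would first try $h^\dagger=M\cdot g$ with $M=B/2$, where $g$ is a network approximating the indicator vector of the optimal treatment $\arg\max_a S(t_1^*,t_2^*,x,a;\eta_a)$. The gap then splits into two contributions. The first is the softmax temperature loss, at most $K e^{-M}$ on the region where $g$ is accurate. The second is the measure of the region where the approximation fails, which is controlled by the margin event $\{x:\min_{a\ne a^*}(S_{a^*}-S_a)(x)<\epsilon\}$ together with the universal approximation of the continuous functions $S_a$ by ReLU networks. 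Letting $B$, and the network size, grow slowly with $n$, and taking $\epsilon\to0$, drives this term to zero because $P(\text{margin}<\epsilon)\to P(\text{ties})$, and ties contribute no loss. Combining the four terms yields $0\le V(\mathcal{D}_0)-V(\sigma_S(\hat h_n))\le o_p(1)$.
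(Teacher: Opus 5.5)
Your proof is correct in substance, but it takes a different and more careful route than the paper's. The paper compares $\hat h_n$ directly with $\mathcal{D}_0$. It writes the gap as $\{V-\hat V\}(\mathcal{D}_0)+\{\hat V(\mathcal{D}_0)-\hat V(\sigma_S(\hat h_n))\}+\{\hat V-V\}(\sigma_S(\hat h_n))$ and drops the middle term ``by the definition of $\hat h_n$''. It then bounds the other two terms for a \emph{fixed} $h$, using an empirical-process piece $I_1$ (with $\hat\eta$ plugged in) and a Taylor-expansion piece $I_2$ in $\eta$.

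That is shorter, but the dropped term is only nonpositive if $\mathcal{D}_0$ is feasible in \eqref{EST_D2}, and it is not. Writing $S_a(x)=S(t_1^*,t_2^*,x,a;\eta_a)$, $\mathcal{D}_0(x)$ is the point mass at $\arg\max_a S_a(x)$. By contrast, every coordinate of $\sigma_S(h)$ with $h\in\mathcal{F}(B)$ is at least $e^{-2B}/(K+1)$. Fixed-$h$ bounds also do not directly control the data-dependent $\hat h_n$.

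Your oracle approximant $h^\dagger$ supplies exactly the missing approximation term. Your split of the uniform deviation handles the randomness of $\hat h_n$: a sup-norm nuisance term $\sup_x\max_a|S(t_1^*,t_2^*,x,a;\hat\eta_a)-S_a(x)|$, plus a Rademacher bound over $\mathcal{F}(B)$ computed with the true $\mathcal{S}$. It also decouples $\hat\eta$ from the sample. Using uniform continuity on a compact set needs less than the paper's Taylor expansion.

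The price is that $B$ and the architecture must grow with $n$; scaling by $M$ under the entrywise constraint $|A_{kl}|\le 1$ also costs extra width or depth. This departs from the constant $B$ in the statement, but it is unavoidable. With $B$ fixed, the coordinate lower bound gives, for every $h\in\mathcal{F}(B)$,
\[
V(\mathcal{D}_0)-V(\sigma_S(h))\ge \frac{e^{-2B}}{K+1}\,E\Bigl[\sum_a\{\max_b S_b(X)-S_a(X)\}\Bigr],
\]
which is a positive constant unless all arms tie almost surely. So the statement is only true in your growing-$B$ form.

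One step of your approximation argument needs repair. Bounding the loss on the failure set by its probability only gives $P(\text{margin}<\epsilon)\to P(\text{ties})$, which need not vanish. Moreover, ``ties contribute no loss'' fails if the network puts mass on a strictly worse third arm there. Take instead $h^\dagger=M\tilde{\mathcal S}$, where $\tilde{\mathcal S}$ is a ReLU network within $\epsilon/3$ of $(S_0,\dots,S_K)$ uniformly on $\mathcal{X}$. Arms within $\epsilon$ of the best then cost at most $\epsilon$ in total. Every other arm receives softmax weight at most $e^{-M\epsilon/3}$. The pointwise loss is therefore at most $\epsilon+Ke^{-M\epsilon/3}$ for every $x$, with no margin condition needed.
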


Theorem \ref{thm:risk_consistency} describes that the estimator of ITD can perform as well as the optimal ITD asymptotically, when our goal is to achieve the  optimal value.

\section{Conclusion and limitations}
\textbf{Conclusion} In this work, we propose a robust approach for the construction of optimal individualized treatment rules under bivariate survival models.
The proposed method integrates prediction-powered inference and deep learning approach to provide a robust way to leverage auxiliary information from black-box prediction machines.
The consistency of the proposed estimators is established.
This theoretical guarantee allows us to safely employ deep neural networks for nonparametric estimation tasks, thereby mitigating the risks of model misspecification and data heterogeneity. 
Both the theoretical and experimental results show the significant potential of the proposed approach in addressing complex decision making challenges.

\textbf{Limitation} This study focuses on the parametric modeling of bivariate time-to-event data.
Extending the proposed methods to nonparametric estimation and inference represents a logical direction to support optimal treatment rule learning. We leave these methodological extensions for future research.

\bibliography{ref}
\bibliographystyle{chicago} 

\appendix
\onecolumn
\section{Appendix}
\subsection{Simulation studies}\label{sec:real_data}
\textbf{Data Generating Process}

We implement simulation studies to evaluate the finite-sample performance of the proposed method in the presence of treatment heterogeneity. 
We consider a scenario with $K=2$ and a $2$-dimensional covariate vector.
For each subject $i=1, \dots, n$, the covariates $ {X}_i = (X_{1i}, X_{2i})^\top$ are generated independently from a uniform distribution:
$X_{pi} \sim \text{Uniform}(-2.8, 2.8), \quad p=1, 2$.
The treatment assignment $A_i$ is randomized uniformly among the three available treatments.
The failure times for two potential outcomes, $T_{1i}(a)$ and $T_{2i}(a)$, are generated as
$$\log T_{ji} (a) =  {X}_i^\top  {\beta}_{ja} + X_{2i}^2 +\epsilon_{jai},$$ 
$j=1,2$, where $ {\beta}_{ja}$ represents the regression coefficients for outcome $j$ under treatment $a$, and independent
$\epsilon_{1ai}, \epsilon_{2ai} \sim \mathcal{N}(0, 1)$.
The censoring times $C_{1}$ and $C_{2}$ are drawn independently from an uniform distribution $\mathcal{U}(-\tau, 2\tau)$ to achieve a $50\%$ censoring rate. 
The observed data consists of $Y_{ji} = \exp(\min(\log T_{ji}, C_{ji}))$, where $\log T_{ji} = I(A=1) \log T_{ji}(1) + I(A=0) \log T_{ji}(0)$, and the event indicator $\Delta_{ji} = I(\log T_{ji} \le C_{ji})$.
In this case, although the working model \eqref{AFT} is misspecified, target parameters ${\beta}_{ja}$ can be estimated consistently via least squares.

The regression coefficients are designed to create distinct decision boundaries across the covariate space:
\begin{itemize}
\item $ {\beta}_{10} = (1.5, 1.0)^\top$ and $ {\beta}_{20} = (1.0, 1.5)^\top$.
\item $ {\beta}_{11}  = (-1.5, 1.0)^\top$ and $ {\beta}_{21}  = (-1.0, 1.5)^\top$.
\item $ {\beta}_{12}  = (0.0, -2.0)^\top$ and $ {\beta}_{22}  = (0.0, -2.0)^\top$.
\end{itemize} 
The true joint survival probability for treatment $a\in\{0,1,2\}$ at a fixed time point $(t_1, t_2)$ is 
\begin{align*}
S(t_1, t_2 ,x,a;\eta_a) 
&= ( S_1(t_1,x,a;(\beta_{1a},\gamma_{1a}))^{- \theta_a} \\
& + S_2(t_2,x,a;(\beta_{2a},\gamma_{2a}))^{- \theta_a} - 1 )^{-1/\theta_a},
\end{align*}  
where the marginal survival probabilities are derived from the AFT model.
The dependence parameters are set to $ {\theta} = (2.0, 2.5, 3.0)$ for treatments 0, 1, and 2, respectively.  

\textbf{Implementation Details and Evaluation Metrics}

We implement Algorithm \ref{alg:app_itr} using a training sample size of $n=200$ and evaluate the results over $R$-times independent Monte Carlo replications.
In particular, $G_{ja}$ is estimated through CPH modeling, and $f^{(a)}_{j0}$ is approximated by the standard nonparametric IPCW least square with the random forest.	
The target of the optimal ITR is to maximize the individual joint survival probability at the fixed time point $(t_{1}^*, t_{2}^*) = (1, 1)$.

We use the following optimal treatment identification accuracy (OTIA) as the primary evaluation metric. 
In particular, we generate an independent testing set ($n_{test}=1000$), on which the corresponding decisions $\hat{D}( {X})$ are predicted, and compared with the true optimal treatment rule, $D_0( {X})$:
$$\text{OTIA} = \frac{1}{n_{test}} \sum_{i=1}^{n_{test}} I(\hat{D}( {X}_i) = D_0( {X}_i)).$$
The true optimal individualized rule $D_0$ is derived from the ground truth survival functions: $D_0  = \arg\max_{a \in \mathcal{A}} S(t_1^*, t_2^* , x, a;\eta_a)$.

We compare the performance of the proposed method under three distinct settings with different weights $c=(c_1, c_2)$:
\begin{enumerate}
\item \textbf{Baseline (Standard IPCW):} $ {c}=(0,0)$ gives the standard parametric IPCW least-squares estimator, serving as the benchmark without leveraging any auxiliary prediction information.
\item \textbf{Prediction-Base (PB) estimation:} $ {c}=(1,1)$ is designed to leverage the information from black-box model, and prediction results are trusted.
\item \textbf{Robust PP estimation:} $ {c}=(-1,1)$ corresponds to the formulation of Prediction-Powered (PP) estimation under right-censored situations.  
\end{enumerate}
The code is available at \url{https://anonymous.4open.science/r/BITR-DFDE/}.

\textbf{Results and Discussion}

The average OTIA across $R=100$ replications for the estimated ITRs under the three weight configurations are: 
Baseline ($ {c}=(0,0)$): $  0.9254 $;
PB ($ {c}=(1,1)$): $ \textbf{0.9578} $;
PP ($ {c}=(-1,1)$): $ 0.9382 $.

The results demonstrate the following key findings:
\begin{enumerate}
\item \textbf{Outstanding Performance of $c=(1,1)$:}
The PB estimator yields the best OTIA.
This result shows that when high-quality predictors are available, the PB estimation approach effectively leverages this auxiliary information to match the high decision-making accuracy of methods based on correctly specified models.

\item \textbf{Decision Boundary Discovery:} Figures \ref{fig2}-\ref{fig4} display the estimated decision boundary in \textit{a single replication} on the testing set. 
In particular, the left panels of each figure present the true optimal treatment assignments for every testing subject, which are displayed as scatter plots in different colors.
Meanwhile, the right panels contains two elements: (i) predicted optimal individualized treatment assignments; and (ii) empirical decision boundaries (colored background) that distinguish diverse treatment assignments via individual features.
The deep neural network-based ITR successfully captures the relationship between covariates and the optimal treatment rank, closely mimicking the ground truth partition under Scenarios where $c=(1,1)$.

\item \textbf{Robustness vs. Efficiency Trade-off:} 
The PP estimation exhibits the medium accuracy.
While it guarantees consistency under predictor misspecification, in this setting where the predict machine $\hat{f}$ is highly accurate, its inherent robustness mechanism introduces additional estimation variability or bias relative to PP. 
This highlights the critical trade-off: when the predictor is reliable, the efficiency-focused method is empirically superior for ITR learning.

\end{enumerate}

\begin{figure}[H]
\begin{center}
\includegraphics[width=0.8\linewidth]{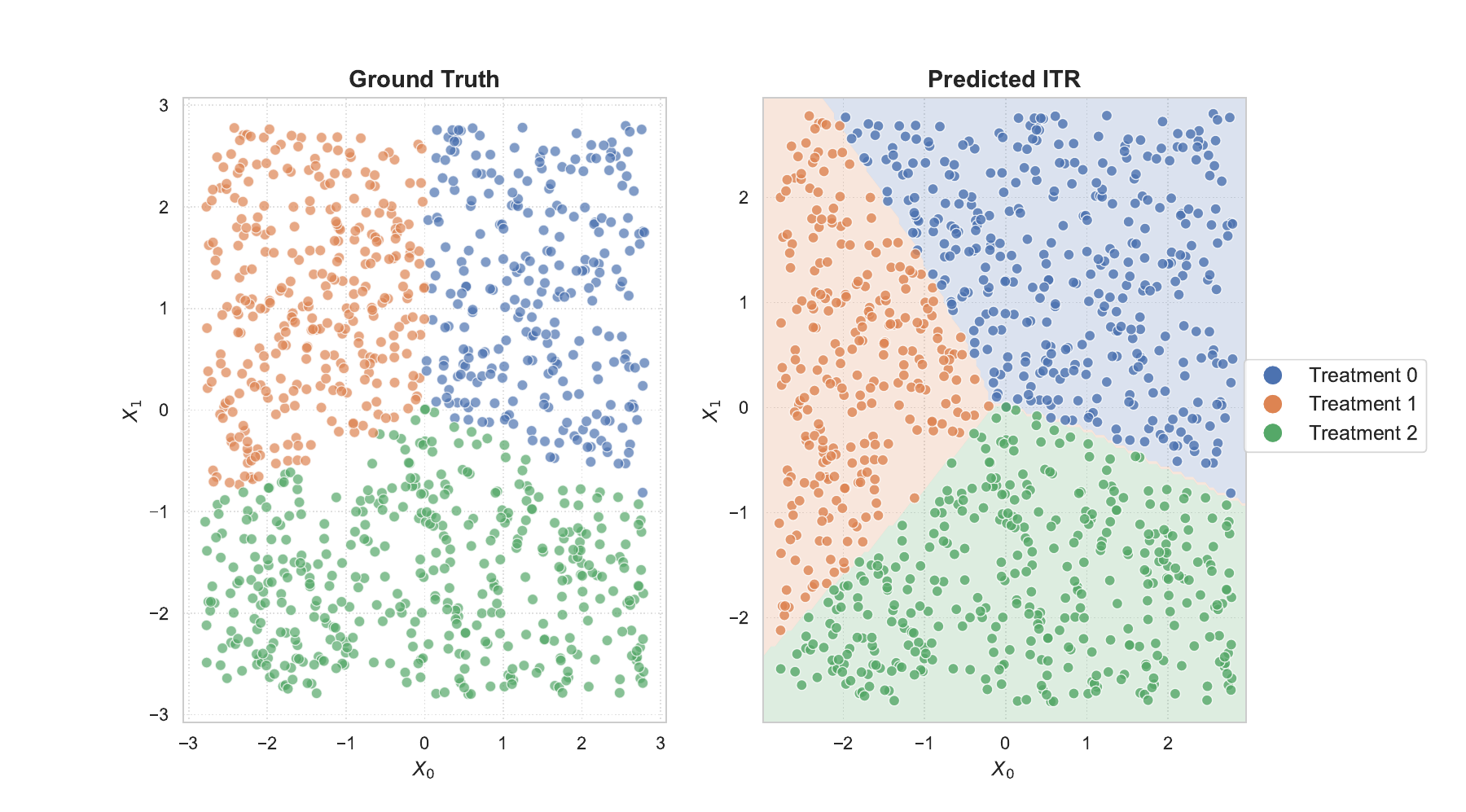}
\caption{Visualization of ground truth and predicted Individualized Treatment Rules with $c=(0,0)$.}
\label{fig2}
\end{center}
\end{figure} 

\begin{figure}[H]
\begin{center}
\includegraphics[width=0.8\linewidth]{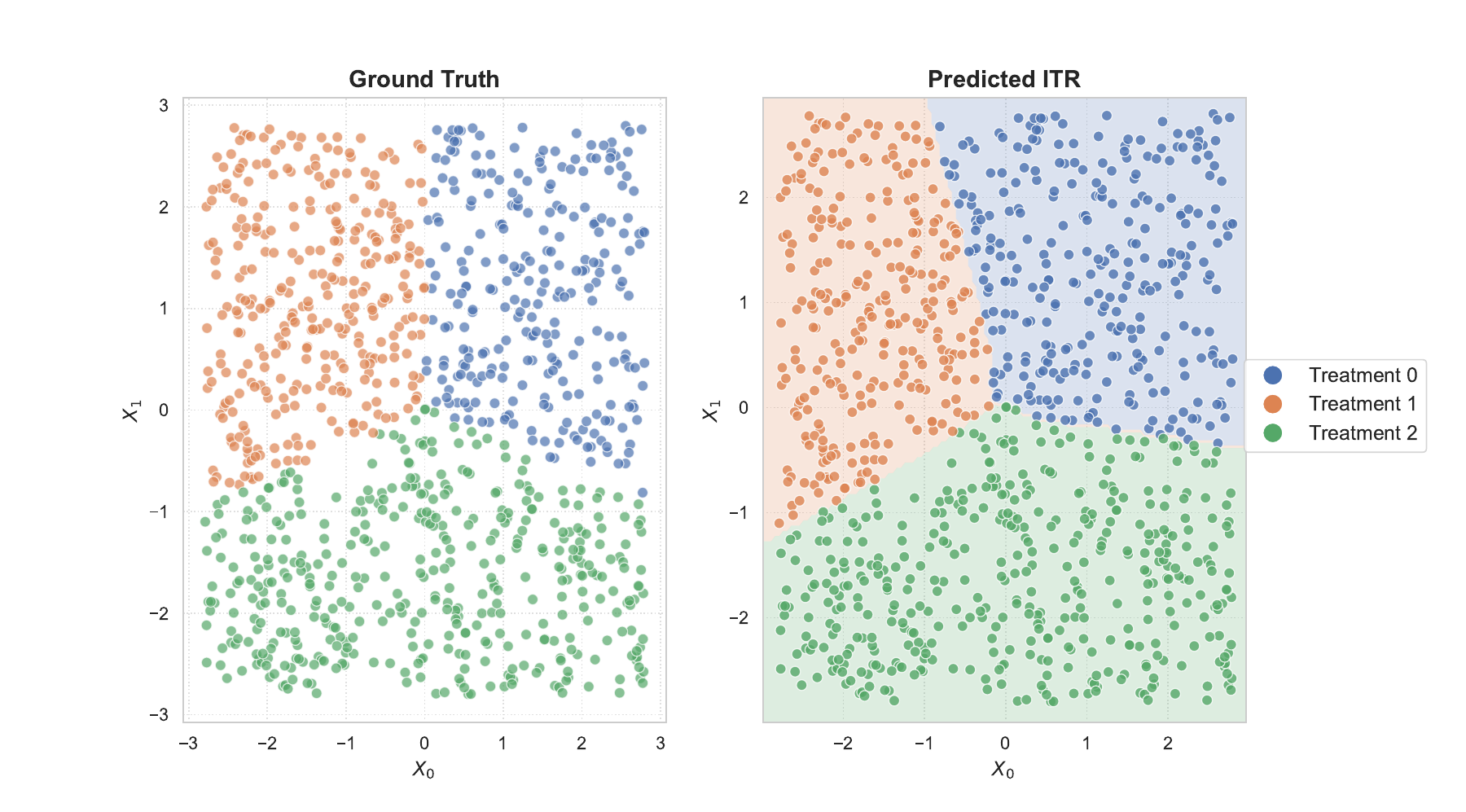}
\caption{$c=(-1,1)$.}
\label{fig3}
\end{center}
\end{figure} 

\begin{figure}[H] 
\begin{center}
\includegraphics[width=0.8\linewidth]{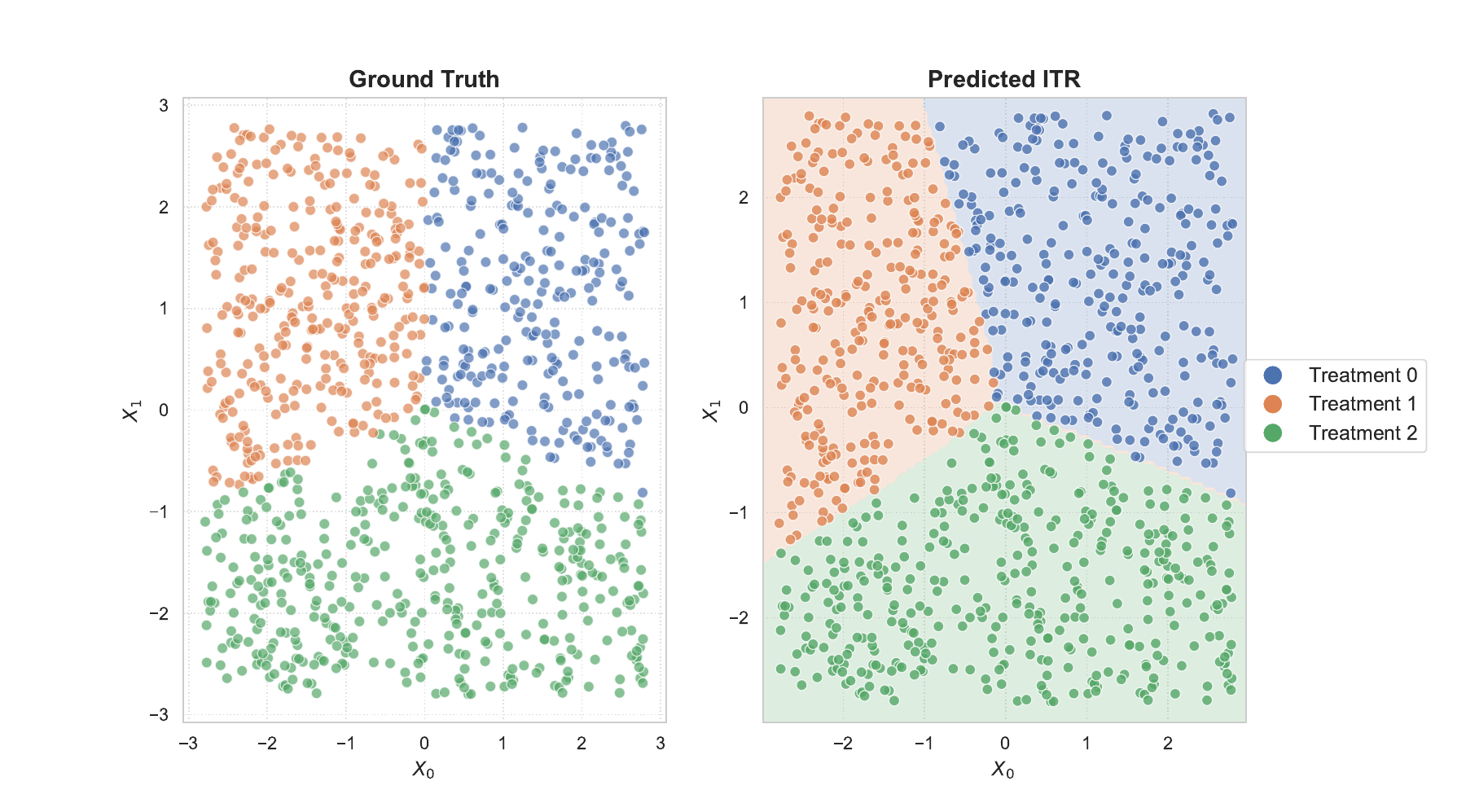}
\caption{$c=(1,1)$.}
\label{fig4}
\end{center}
\end{figure}

\subsection{An additional simulation study}
We consider additional situations where potential failure times, $T_{1i}(a)$ and $T_{2i}(a)$, are generated as
$$\log T_{ji} (a) =  {X}_i^\top  {\beta}_{ja} + \epsilon_{ji},$$
$a=0,1,2$, $j=1,2$ with
\begin{itemize}
\item[] Case 1. $\epsilon_{1i} | (X_{1i}, X_{2i})\sim \mathcal{N}(0, X_{1i}^2)$ and $\epsilon_{2i} | (X_{1i}, X_{2i})\sim \mathcal{N}(0, X_{2i}^2)$.

\item[] Case 2. $\epsilon_{1ai}, \epsilon_{2ai} \sim \mathcal{N}(0, 1)$.
\end{itemize}
Other settings are same to the data generating process shown in Section \ref{sec:real_data}.
Under the above two settings, we apply the working model \eqref{AFT}. 
Then, Algorithm \ref{alg:app_itr} is implemented with a training sample size of $n=200$ and $100$ replications.

\textbf{Case 1.}
The average OTIA across $R=100$ replications for the estimated ITRs under the three weight configurations are: 
Baseline ($ {c}=(0,0)$): $0.9839$;
PB ($ {c}=(1,1)$): $0.9831$;
PP ($ {c}=(-1,1)$): $0.9822$.

Briefly, all of these three methods successfully capture the latent relationships between covariates.
Then, the desired estimators contribute to well training of Softmax deep neural networks, leading to establishment of the optimal decision empirically.
Figures \ref{fig8}-\ref{fig10} display the estimated decision boundary in a single replication on the testing set.  

\textbf{Case 2.}
Table \ref{tb2} summarizes the estimating results for coefficients under different weight selections.
The simulation results across different weight settings indicate that the weight $(0,0)$ provides the most robust estimation for the coefficients $\beta_{ja}$. 
Specifically, such a baseline achieves the minimum absolute average bias and the lowest sample standard deviation (SSD) across all treatment levels ($a=0, 1, 2$) and outcomes ($j=1, 2$), presenting superior accuracy and efficiency. 

In contrast, PB ( $c=(1,1)$ ) and PP ($c=(-1,1)$) methods exhibit significantly higher biases and increased variability. 
The performance degradation is most pronounced under the PP approach, which yields the largest SSDs in most scenarios, likely reflecting a higher sensitivity to the heteroscedastic error structure defined in the data generation process. 
Such a conclusion is similar to the results of comparing prediction-powered estimation and least square method in semi-supervised linear regression problems as shown by \cite{zrnic2024cross}.

The average OTIA across $100$ replications for the estimated ITRs under the three weight configurations are: 
Baseline ($ {c}=(0,0)$): $0.9750$;
PB ($ {c}=(1,1)$): $0.9736$;
PP ($ {c}=(-1,1)$): $0.9716$.
Figures \ref{fig5}-\ref{fig7} display the estimated decision boundary in a single replication on the testing set. 
It is clear to see that only points located on the decision boundary are at risk of being misclassified based on all three methods.
Combining the metrics, OTIA and results shown in figures, we conclude that although the estimated policy is relatively rough in a single experiment, deep neural networks are able to perform well in average.

\begin{table}[htbp]
\centering
\small
\caption{The absolute average bias (Bias) and sample standard deviation (SSD) for the estimates of coefficients $\beta_{ja}$ under different weight settings.}
\label{tb2}
\begin{tabular}{cccccccccc}
\hline
\multirow{2}{*}{Action} & \multirow{2}{*}{Outcome} & \multirow{2}{*}{$\beta$} & \multicolumn{2}{c}{$(1,1)$} & \multicolumn{2}{c}{$(0,0)$} & \multicolumn{2}{c}{$(-1,1)$} \\
\cmidrule(lr){4-5} \cmidrule(lr){6-7} \cmidrule(lr){8-9}
& & & Bias & SSD & Bias & SSD & Bias & SSD \\
\hline
\multirow{4}{*}{0} & \multirow{2}{*}{1} & $\beta_{11}$ & 0.0845 & 0.2530 & 0.0190 & 0.2396 & 0.0548 & 0.3223 \\
&                    & $\beta_{12}$ & 0.1062 & 0.1890 & 0.0010 & 0.1605 & 0.0996 & 0.2173 \\  
& \multirow{2}{*}{2} & $\beta_{21}$ & 0.1253 & 0.1848 & 0.0044 & 0.1677 & 0.1000 & 0.2524 \\
&                    & $\beta_{22}$ & 0.1048 & 0.2755 & 0.0035 & 0.2305 & 0.0871 & 0.2952 \\

\multirow{4}{*}{1} & \multirow{2}{*}{1} & $\beta_{11}$ & 0.0353 & 0.2773 & 0.0204 & 0.2291 & 0.0377 & 0.3026 \\
&                    & $\beta_{12}$ & 0.1450 & 0.2036 & 0.0426 & 0.1793 & 0.1294 & 0.2361 \\  
& \multirow{2}{*}{2} & $\beta_{21}$ & 0.1655 & 0.2168 & 0.0218 & 0.1991 & 0.1471 & 0.2407 \\
&                    & $\beta_{22}$ & 0.0933 & 0.2629 & 0.0051 & 0.2323 & 0.0569 & 0.2769 \\

\multirow{4}{*}{2} & \multirow{2}{*}{1} & $\beta_{11}$ & 0.0017 & 0.1835 & 0.0212 & 0.2053 & 0.0081 & 0.2240 \\
&                    & $\beta_{12}$ & 0.0247 & 0.1592 & 0.0369 & 0.1721 & 0.0467 & 0.1938 \\  
& \multirow{2}{*}{2} & $\beta_{21}$ & 0.0468 & 0.1864 & 0.0058 & 0.1889 & 0.0290 & 0.2192 \\
&                    & $\beta_{22}$ & 0.0691 & 0.2774 & 0.0143 & 0.2382 & 0.0285 & 0.3281 \\
\hline
\end{tabular}
\end{table}

\begin{figure}[H]
\begin{center}
\includegraphics[width=0.8\linewidth]{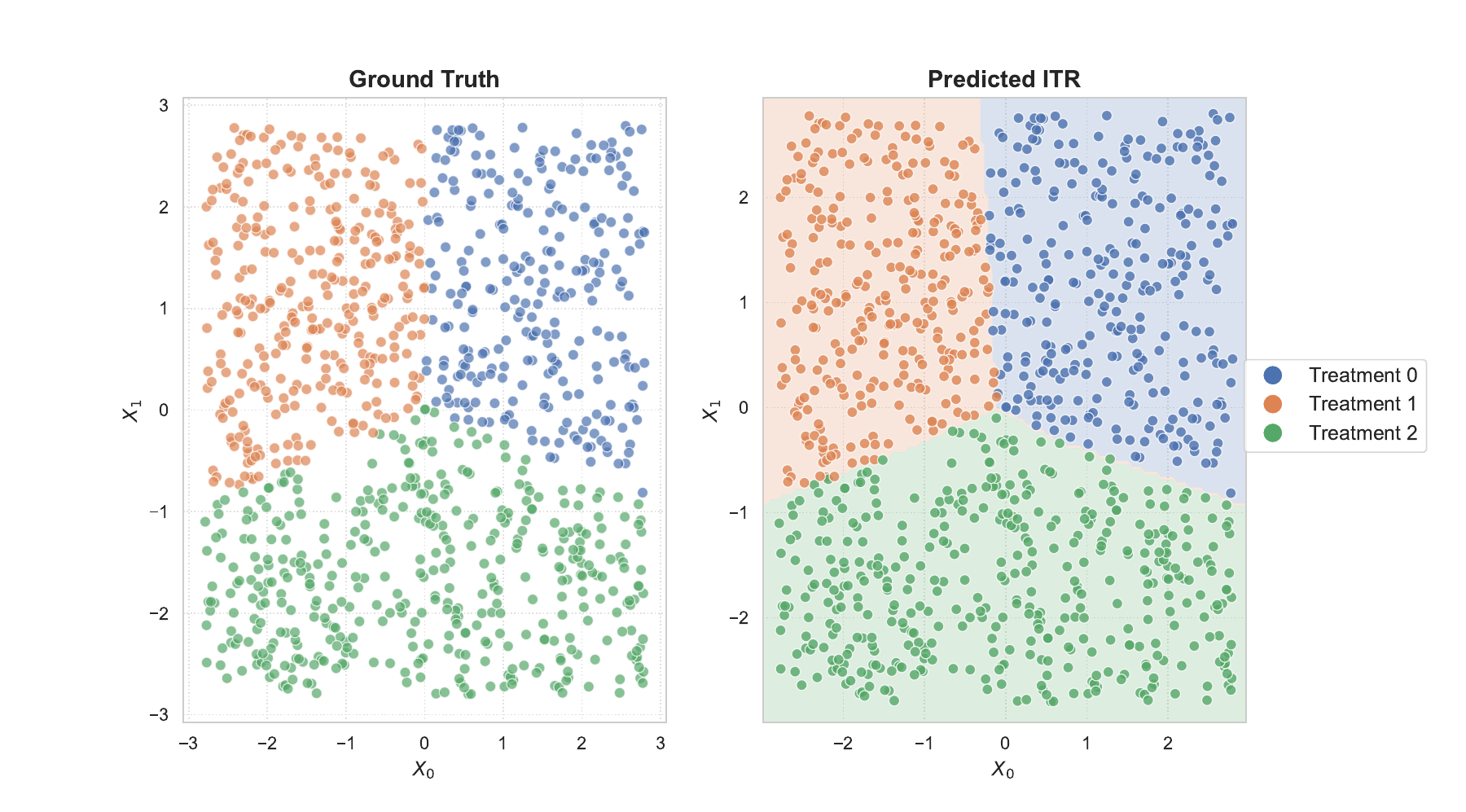}
\caption{Visualization of ground truth and predicted Individualized Treatment Rules with $c=(0,0)$.}
\label{fig8}
\end{center}
\end{figure} 

\begin{figure}[H]
\begin{center}
\includegraphics[width=0.8\linewidth]{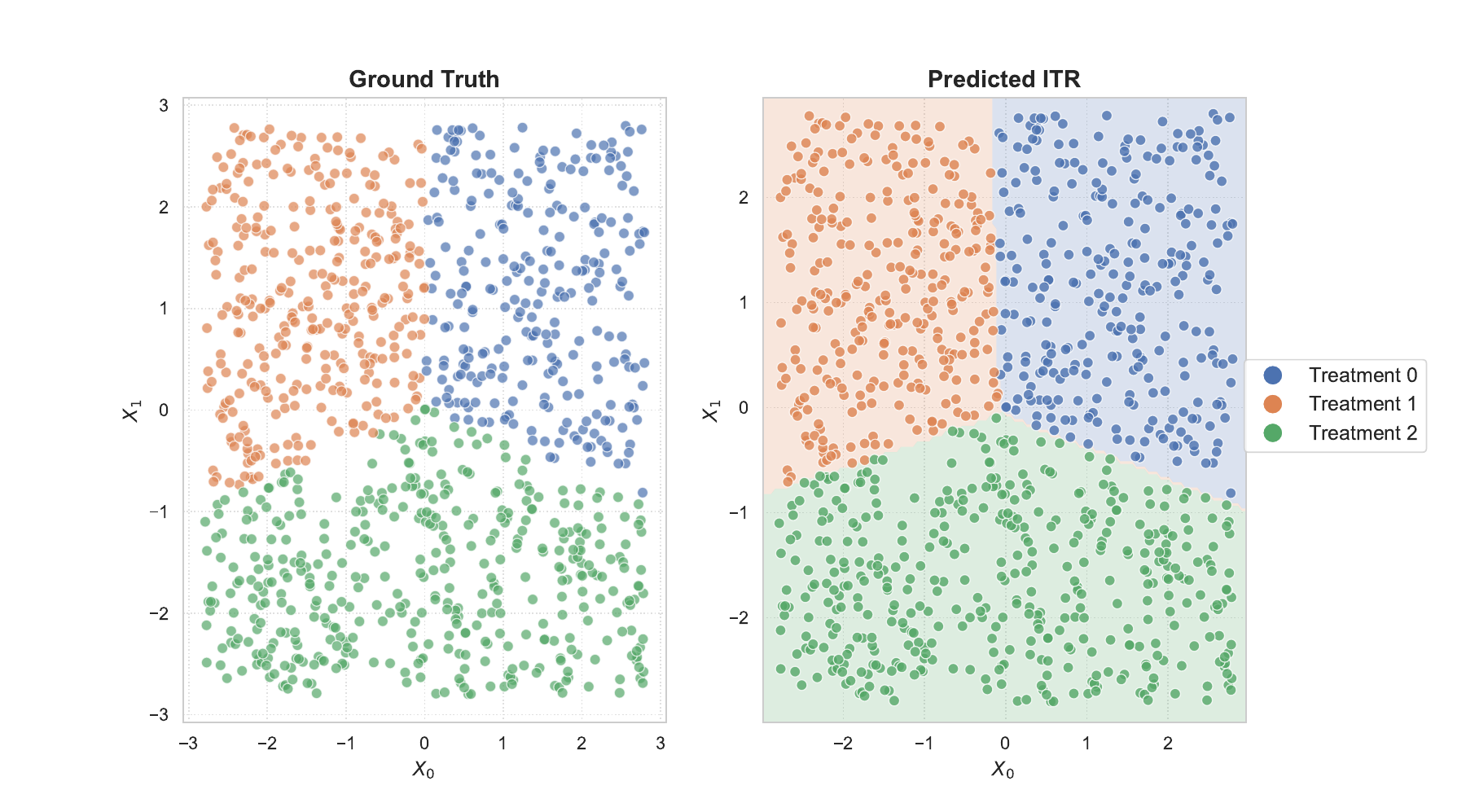}
\caption{Visualization of ground truth and predicted Individualized Treatment Rules with $c=(-1,1)$.}
\label{fig9}
\end{center}
\end{figure} 

\begin{figure}[H] 
\begin{center}
\includegraphics[width=0.8\linewidth]{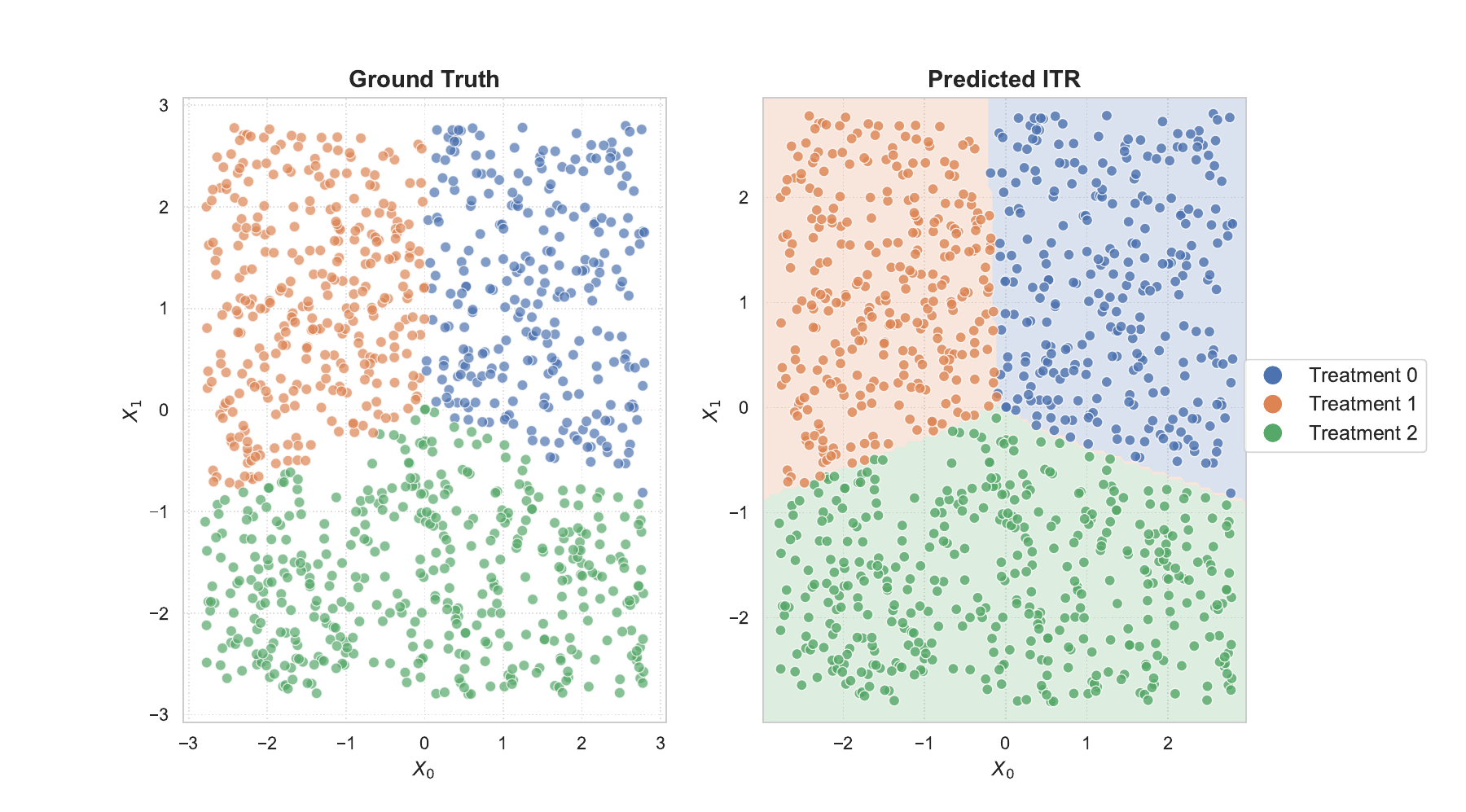}
\caption{Visualization of ground truth and predicted Individualized Treatment Rules with $c=(1,1)$.}
\label{fig10}
\end{center}
\end{figure} 

\begin{figure}[H]
\begin{center}
\includegraphics[width=0.8\linewidth]{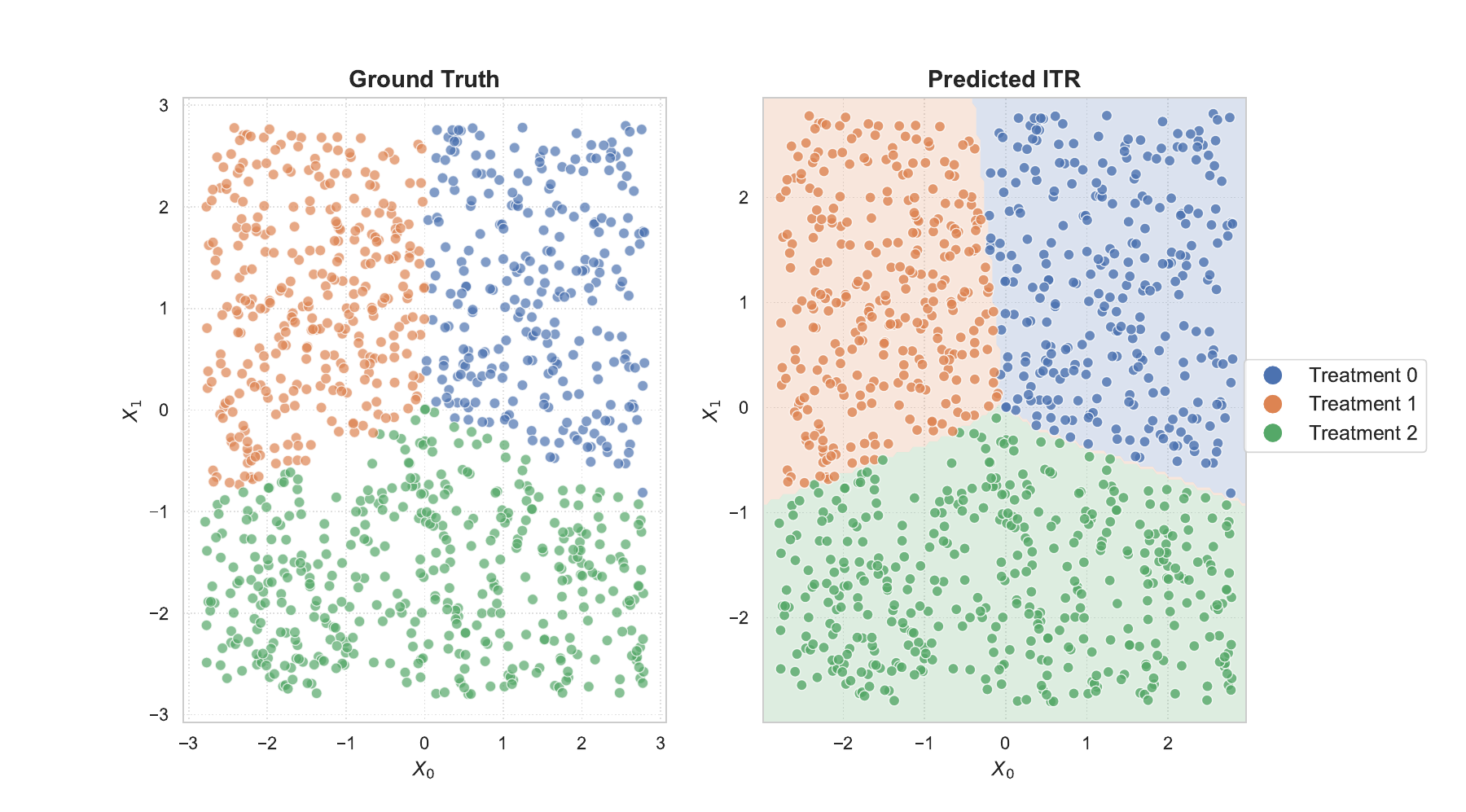}
\caption{Visualization of ground truth and predicted Individualized Treatment Rules with $c=(0,0)$.}
\label{fig5}
\end{center}
\end{figure} 

\begin{figure}[H]
\begin{center}
\includegraphics[width=0.8\linewidth]{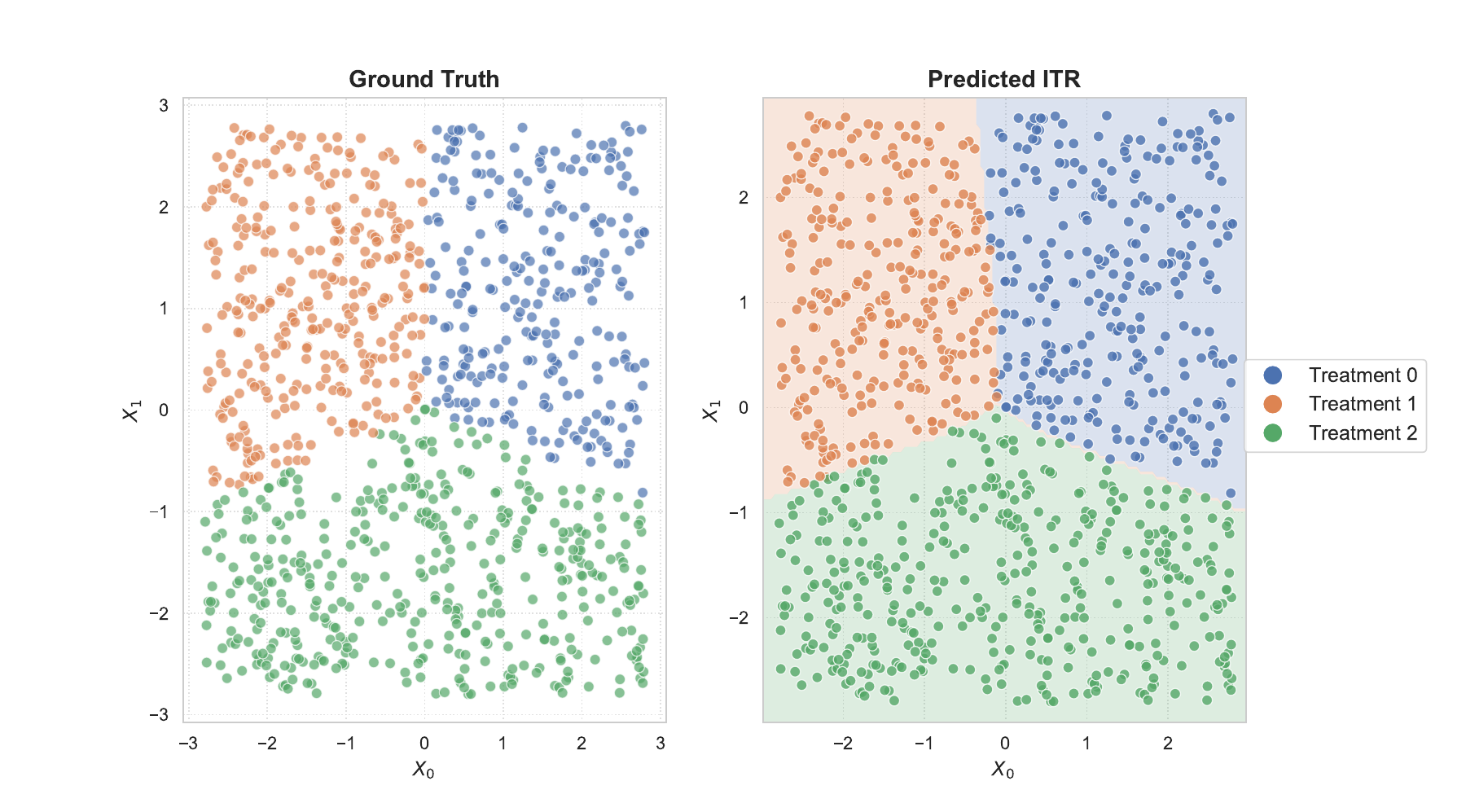}
\caption{Visualization of ground truth and predicted Individualized Treatment Rules with $c=(-1,1)$.}
\label{fig6}
\end{center}
\end{figure} 

\begin{figure}[H] 
\begin{center}
\includegraphics[width=0.8\linewidth]{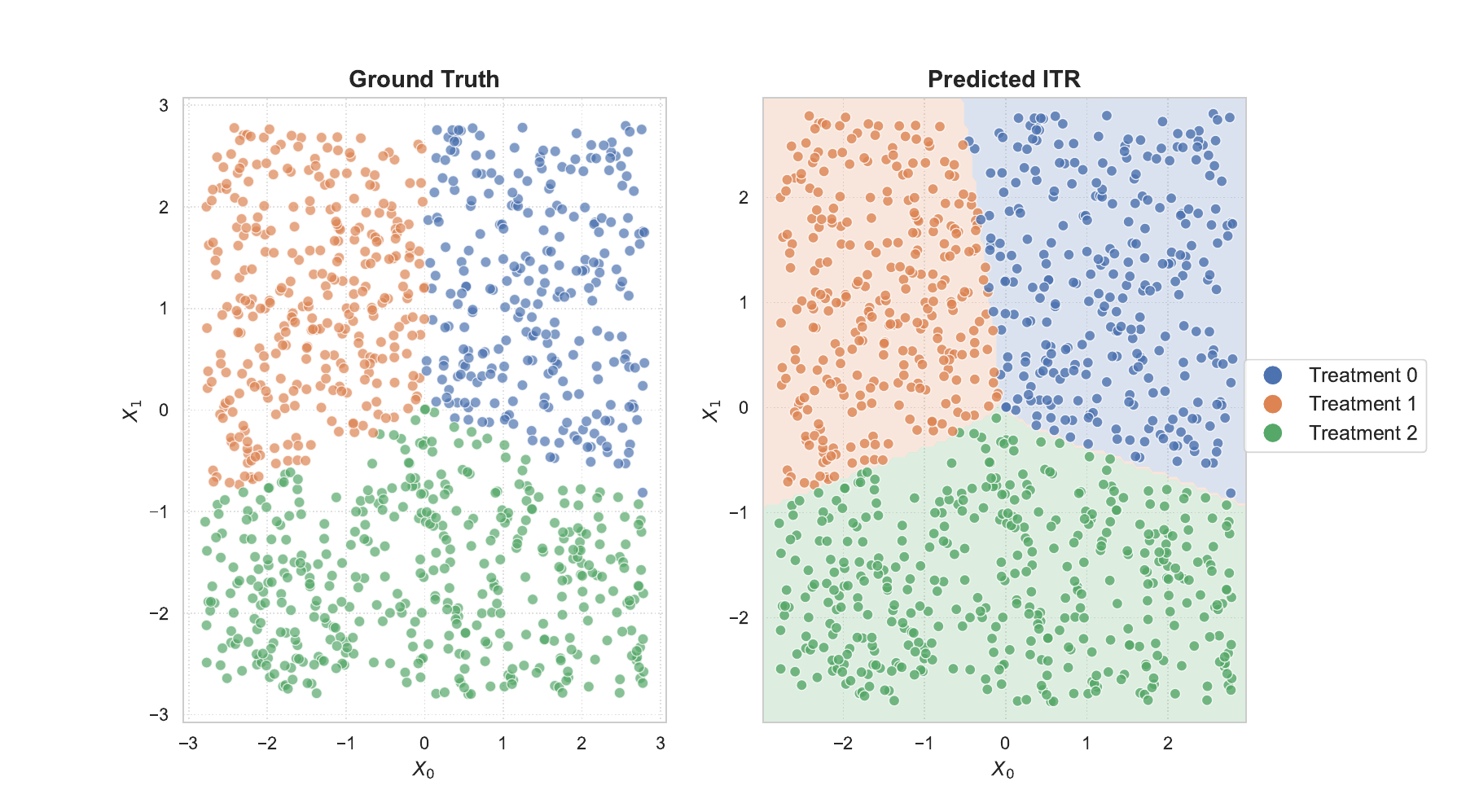}
\caption{Visualization of ground truth and predicted Individualized Treatment Rules with $c=(1,1)$.}
\label{fig7}
\end{center}
\end{figure} 

\subsection{Proofs of Theorems \ref{thm:consistency_beta} and \ref{thm:risk_consistency}} 

Define the population and empirical measures for any measurable function $g$ as $\mathbb Pg(O)=\int g(o)dP(o)$ and $\mathbb{P}_ng(O)=n^{-1}\sum_{i=1}^{n}g(O_{i})$.
Let $\mathbb{P}_{n_a} g(O)= n^{-1} \sum_{i=1}^{n} I(A=a) g(O_{i}) := n_a^{-1} \sum_{i=1}^{n_a} g(O_{i}) $.
Denote $a_n \lesssim b_n$ as $a_n \le cb_n$ for some $c>0$ and any $n$.

\textbf{Proof of Theorem \ref{thm:consistency_beta}}
\begin{proof}
Equation \ref{PLS2} implies that 
\begin{align*}
\hat{\beta}_{ja}^{APP} =& \bigg(\frac{1}{n_a}\sum_{i=1}^{n_a}\frac{\Delta_i}{\hat{G}_{ja}(Y_{ji},X_i)} X_i X_i^{\top} + \frac{c_1}{n_a}\sum_{i=1}^{n_a} X_iX_i^{\top} + \frac{c_2}{n-n_a}\sum_{i=1}^{n-n_a}  X_iX_i^{\top}  \bigg)^{-1} \\
&\cdot \bigg(\frac{1}{n_a}\sum_{i=1}^{n_a} X_i \Delta_i \hat{G}_{ja}^{-1}(Y_{ji},X_i) \log Y_{ji} + \frac{c_1}{n_a}\sum_{i=1}^{n_a} X_i \log \hat{T}_{ji}(a) + \frac{c_2}{n-n_a}\sum_{i=1}^{n-n_a} X_i \log \hat{T}_{ji}(a) \bigg).
\end{align*}
First, we can conclude that by Law of Large Numbers (LLN),
\begin{align}\label{EQ0}
\frac{c_1}{n_a}\sum_{i=1}^{n_a} X_iX_i^{\top} + \frac{c_2}{n-n_a}\sum_{i=1}^{n-n_a}  X_iX_i^{\top}  =(c_1+c_2) E[XX^{\top}] +o_p(1).
\end{align} 
Under Assumption \ref{ass:regularity}, when $\| \hat{G}_{ja} -G_{ja}\|_2 = o_p(1)$,
\begin{align}\label{EQ1}
&\frac{1}{n_a}\sum_{i=1}^{n_a}\frac{\Delta_i}{\hat{G}_{ja}(Y_{ji},X_i)} X_i X_i^{\top} - E[XX^{\top}]\nonumber  \\
=& \underbrace{\frac{1}{n_a}\sum_{i=1}^{n_a}\frac{\Delta_i}{\hat{G}_{ja}(Y_{ji},X_i)} X_i X_i^{\top}  - \frac{1}{n_a}\sum_{i=1}^{n_a}\frac{\Delta_i}{G_{ja}(Y_{ji},X_i)} X_i X_i^{\top}}_{A}  + \underbrace{\frac{1}{n_a}\sum_{i=1}^{n_a}\frac{\Delta_i}{G_{ja}(Y_{ji},X_i)} X_i X_i^{\top}- E[XX^{\top}]}_{B}.
\end{align}
Here $B = o_p(1)$ follows from LLN with Assumption \ref{ass:regularity} (ii) and (iii).
Meanwhile, for each element of the matrix $A$, $A_{kr}$, as the smallest eigenvalue of $E[XX^{\top}]$ is bounded away from zero, we have
\begin{align*}
|A_{kr}| \lesssim &  \bigg|\frac{1}{n_a}\sum_{i=1}^{n_a}  \Delta_i (G_{ja}(Y_{ji},X_i) -  \hat{G}_{ja}(Y_{ji},X_i) ) \bigg|\\
\lesssim & \| \hat{G}_{ja} -G_{ja}\|_2 + (\mathbb{P}_{n_a} - \mathbb{P})[ \hat{G}_{ja} -G_{ja} ] = o_p(1).
\end{align*}
Then, similarly, we have
\begin{align}\label{EQ2}
\frac{1}{n_a}\sum_{i=1}^{n_a} X_i \Delta_i \hat{G}_{ja}^{-1}(Y_{ji},X_i) \log Y_{ji} = E[X\Delta G_{ja}^{-1}(Y_{j},X) \log Y_j] + o_p(1),
\end{align}
and 
\begin{align}\label{EQ3}
\frac{c_1}{n_a}\sum_{i=1}^{n_a} X_i \log \hat{T}_{ji}(a) + \frac{c_2}{n-n_a}\sum_{i=1}^{n-n_a} X_i \log \hat{T}_{ji}(a) = (c_1+c_2) E[X f_j^{(a)}(X)]  + o_p(1),
\end{align}
where $f_j^{(a)}$ denotes the probability limit of $\hat{f}^{(a)}_{jn}$.

(i). When $c=(1,-1)$ or $c=(-1,1)$, combining \eqref{EQ0}-\eqref{EQ3}, we obtain that
\begin{align*}
\hat{\beta}_{ja}^{APP} = E[XX^{\top}]^{-1}E[X\Delta G_{ja}^{-1}(Y_{j},X) \log Y_j]  +o_p(1) = \beta_{ja} +o_p(1).
\end{align*}

(ii). When $f_j^{(a)} = f_{ja}^{(a)}$, we have $	\hat{\beta}_{ja}^{APP}= \beta_{ja} +o_p(1) $.
\end{proof}

\textbf{Proof of Theorem \ref{thm:risk_consistency}}
\begin{proof} 
We decompose the value into three terms:
\begin{align*}
0< V(d^*) - V(\sigma_S(\hat{h}_n)) =& V(\mathcal{D}_0) - \hat{V}(\mathcal{D}_0) + \hat{V}(\mathcal{D}_0) - \hat{V}(\sigma_S(\hat{h}_n)) + \hat{V}(\sigma_S(\hat{h}_n)) - V(\sigma_S(\hat{h}_n))\\
\leq &V(\mathcal{D}_0) - \hat{V}(\mathcal{D}_0)+\hat{V}(\sigma_S(\hat{h}_n)) - V(\sigma_S(\hat{h}_n)),
\end{align*} 
where the second inequality follows from the definition of $\hat{h}_n$.	

For any fixed function $h$, we first consider the $a$-th corresponding component of $\mathcal{ S}$, $S(t_1,t_2,X,a;\eta_{a})$, and have for given $a\in\mathcal{A}$, $\eta_a = (\beta_{1a}, \beta_{2a}, \gamma_{1a}, \gamma_{2a}, \theta_a)$ and $\hat{\eta}_a = (\hat{\beta}_{1a}, \hat{\beta}_{2a}, \hat{\gamma}_{1a}, \hat{\gamma}_{2a}, \hat{\theta}_a)$,
\begin{align*}
&\bigg| \mathbb{P} S(t_1,t_2,X,a;\eta_{a}) \frac{e^{h_a(X)}}{\sum_{i=0}^{K}e^{h_i(X)}} - \mathbb{P}_n S(t_1,t_2,X,a;\hat{\eta}_{a}) \frac{e^{h_a(X)}}{\sum_{i=0}^{K}e^{h_i(X)}}\bigg|\\
=&\underbrace{\bigg| \mathbb{P}_n S(t_1,t_2,X,a;\hat{\eta}_{a}) \frac{e^{h_a(X)}}{\sum_{i=0}^{K}e^{h_i(X)}} - \mathbb{P} S(t_1,t_2,X,a;\hat{\eta}_{a}) \frac{e^{h_a(X)}}{\sum_{i=0}^{K}e^{h_i(X)}}\bigg| }_{I_1}\\
&+ \underbrace{\bigg| \mathbb{P} S(t_1,t_2,X,a;\eta_{a}) \frac{e^{h_a(X)}}{\sum_{i=0}^{K}e^{h_i(X)}} - \mathbb{P} S(t_1,t_2,X,a;\hat{\eta}_{a}) \frac{e^{h_a(X)}}{\sum_{i=0}^{K}e^{h_i(X)}}\bigg|}_{I_2}.
\end{align*}
Here $I_1=o_p(1)$ due to Markov's inequality and the fact that $\eta_a$ is finite-dimensional.
By Taylor expansion, we have for $B$, 
\begin{align*}
I_2 \leq \big| \mathbb{P} \nabla S(t_1,t_2,X,a;\eta_{a})^{\top} (\eta_{a} - \hat{\eta}_a) \big| + O(\|\eta_{a} - \hat{\eta}_a)\|^2) \lesssim \|\eta_{a} - \hat{\eta}_a\|=o_p(1).
\end{align*}
Consequently, we can conclude that
$| V(d^*) - V(\sigma_S(\hat{h}_n)) |=o_p(1)$, which completes the proof.

\end{proof}
 
\end{document}